\documentclass[conference]{IEEEtran}
\usepackage{times}
\usepackage{amsmath,amssymb,amsthm}
\usepackage{bm}
\usepackage{graphicx}
\usepackage{algorithm}
\usepackage{algpseudocode}
\usepackage{float}
\usepackage{tikz}
\usepackage{booktabs}
\usepackage{eso-pic}   % \AddToShipoutPictureBG*, \AtPageUpperLeft
\usepackage{graphicx}  % \raisebox
\usepackage[table]{xcolor}

\usepackage{graphicx}
\usepackage{subcaption}
\usetikzlibrary{arrows.meta, positioning}

\usepackage[numbers]{natbib}
\usepackage{multicol}
\usepackage[bookmarks=true,hypertexnames=false]{hyperref}

\theoremstyle{plain}

\newtheorem{proposition}{Proposition}

\newtheorem{theorem}{Theorem}
\newtheorem{corollary}{Corollary}
\newtheorem{definition}{Definition}
\newtheorem{remark}{Remark}

\begin{document}

\AddToShipoutPictureBG*{%
  \AtPageUpperLeft{%
    \hspace{0.5\paperwidth}%
    \raisebox{-1.1cm}{%
      \makebox[0pt][c]{%
        \parbox{\textwidth}{%
          \centering
          \normalsize
          \textit{This paper has been accepted for presentation at the RSS 2026 Workshop on \\ From Perception to Action: Representation-Centric Robot Autonomy.}
        }%
      }%
    }%
  }%
}

\title{Self-supervised Geometry Reasoning for LiDAR Simultaneous Localization and Mapping}

\author{
Jiwoo Kim$^{1,*}$\quad
Jinwoo Lee$^{1,*}$\quad
Woojae Shin$^{1}$\quad
Giseop Kim$^{2}$\quad
Hyondong Oh$^{1,\dagger}$ \\[1mm]
{\small $^{1}$Korea Advanced Institute of Science and Technology (KAIST), Daejeon, Republic of Korea} \\
{\small $^{2}$Daegu Gyeongbuk Institute of Science and Technology (DGIST), Daegu, Republic of Korea} \\[1mm]
{\small $^{*}$Equal contribution. $^{\dagger}$Corresponding author.} \\[1mm]
{\small Emails: \texttt{\{tars0523, jinwoolee, oj7987, h.oh\}@kaist.ac.kr}, \texttt{gsk@dgist.ac.kr}}
}
\maketitle

\begin{abstract}
LiDAR simultaneous localization and mapping (SLAM) relies on local geometric quantities such as covariances, correspondences, and surface structures. However, most existing pipelines rely on hand-crafted estimates of local geometry and use them as fixed inputs to LiDAR SLAM, which can make the estimated local geometry noisy and unstable in sparse regions of a point cloud or when using low-resolution LiDAR. To address this issue, this paper introduces a self-supervised framework that learns an explicit symbolic representation of local geometry and uses it to improve LiDAR SLAM recursively. Specifically, each point is represented as a Gaussian distribution, allowing local geometry to be described by a covariance. Without dense geometry labels or ground-truth poses, the framework learns by maximizing the likelihood of local geometry, with self-supervision derived from consistency relations over symbolic geometric representations, including predicted covariances, correspondences, and trajectory from SLAM. The learned geometry is then fed back into LiDAR SLAM, forming a reciprocal loop in which improved geometry enhances localization and mapping, and improved localization provides cleaner supervision for subsequent geometry reasoning. This framework is backend-agnostic and can be plugged into existing LiDAR SLAM pipelines without architectural changes. Experiments on KITTI under varying LiDAR resolutions show that the proposed method improves both odometry and global registration.
\end{abstract}

\IEEEpeerreviewmaketitle

\section{Introduction}
LiDAR simultaneous localization and mapping (SLAM) is a fundamental technology that enables robots to explore, localize, and interact with previously unknown environments~\citep{cadena2017past}. A common principle behind many LiDAR SLAM modules is the use of local geometric information extracted from point clouds. For example, LiDAR odometry often relies on local covariance structures to achieve accurate scan matching~\citep{zhang2014loam}, while global registration methods exploit surface normals and local geometric consistency to establish reliable correspondences across distant scans~\citep{lim2024quatro++}.

However, most existing pipelines estimate such local geometry using hand-crafted neighborhood statistics and then use it as a fixed input to subsequent optimization. These estimates can become noisy and unstable in sparse regions of a point cloud or under low-resolution LiDAR observations, making the overall SLAM performance highly dependent on the environment and sensor characteristics. 

% Deep learning-based approaches have been explored to address these limitations by predicting richer geometric information from point clouds. However, they often require dense geometric supervision, such as complete point clouds or surface annotations, or accurate ground-truth trajectories for training. Such requirements limit their applicability in real-world SLAM scenarios, where dense geometric labels and precise pose annotations are difficult to obtain.

\begin{figure}
    \centering
    \includegraphics[width=0.99\linewidth]{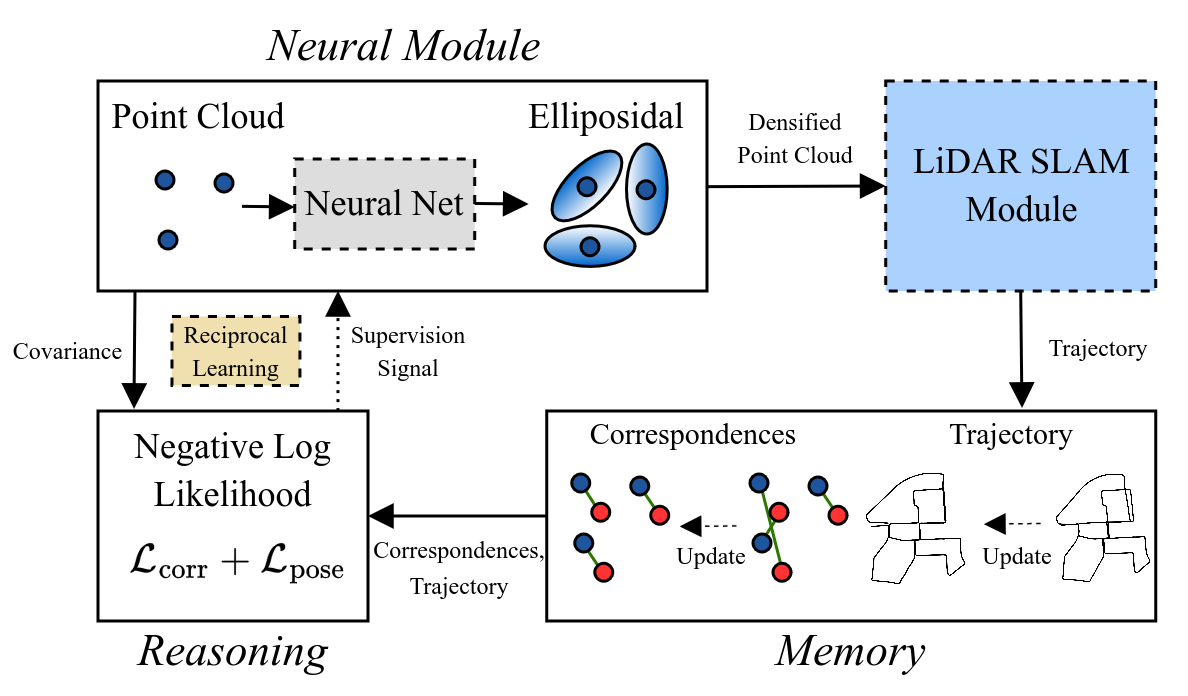}
\caption{Overview of the proposed self-supervised geometry reasoning framework. The neural module infers point-wise covariance as local geometry and samples additional points with this covariance to produce a densified point cloud for LiDAR SLAM. The reasoning module provides likelihood-based self-supervision from correspondences and trajectory in memory, while the updated SLAM trajectory is written back to memory, forming a reciprocal loop between geometry reasoning and SLAM.}
\label{fig:framework}
\end{figure}

In this paper, we move beyond this perspective and propose a self-supervised geometric reasoning framework that recursively infers scene geometry from raw LiDAR observations and leverages it for localization and mapping, without requiring dense geometry labels or ground-truth trajectories. Our formulation is inspired by imperative learning~\citep{wang2024imperative}, which introduces a general self-supervised neuro-symbolic framework where neural module, symbolic reasoning, and memory are reciprocally optimized through bilevel optimization. In our formulation, the neural module predicts local geometry from raw LiDAR observations, while LiDAR SLAM uses this information to estimate the trajectory. The resulting trajectory, in turn, serves as an evolving memory, providing more reliable poses and correspondences for the next round of geometry reasoning. This reciprocal interaction between the neural module, reasoning module, memory, and SLAM backend is summarized in Fig.~\ref{fig:framework}. The proposed framework is backend-agnostic, enabling learned geometric priors to improve existing LiDAR SLAM pipelines in a self-supervised manner.

%%%%%%%%%%%%%%%%%%%%%%%%%%%%%%%%%%%%%%%%%%%%%%%%%%%%%%%%%%%%%%%%%%%%%%%%%%%%%%%%%%%%%%%%%%%%%%%%%%%%%%%%%%%%%%%%%%%%%%%%%%%%%%%%%%%%%%%%%%%%%%%%%%%%%%%%%%%%%%%%%%%%%%%%%%%%%%%%%%%%%%%%%%%%%%%%%%%%%%%%%%%%%%%%%%%%%%%%%%%%%%%%%%%%%%%%%%%%%%%%%%%%%%%%%%%%%%%%%%%%%%%%%%%%%%%%%%%%%%%%%%%%%%%%%%%%%%%%%%%%%%%%%%%%%%%%%%%%%%%%%%%%%%%%%%%%%%%%%%%%%%%%%%%%%%%%%%%%%%%%%%%%%%%%%%%%%%%%%%%%%%%%%%%%%%%%%%%%%%%%%%%%%%%%%%%%%%%%%%%%%%%%%%%%%%%%%%%%%%%%%%%%%%%%%%%%%%%%%%%%%%%%%%%%%%%%%%%%%%%%%%%%%%%%%%%%%%%%%%%%%%%%%%%%%%%%

\section{Related Works}

\subsection{Underlying Geometry Estimation from Point Clouds}

Estimating the underlying geometry of point clouds is essential for geometric perception~\citep{guo2020deep}. Classical methods typically use hand-crafted neighborhood statistics, such as PCA-based covariance analysis~\citep{pauly2003multi} and surface normal estimation~\citep{mitra2003estimating}. Although efficient, these methods heavily depend on local sampling quality and neighborhood selection, making them fragile under sparse, noisy, and non-uniform LiDAR measurements~\citep{lin2014eigen}.

Learning-based methods have been proposed to infer richer geometric structures, often by densifying sparse point clouds or reconstructing complete surfaces~\citep{vizzo2022make, yang2024tulip}. However, these approaches usually require dense ground-truth geometry for supervision and may hallucinate unobserved structures that are not supported by the actual measurements~\citep{yang2024tulip}. Unlike full-scene completion, our work aims to infer local underlying geometry around observed points, providing a compact, uncertainty-aware, and physically grounded representation for geometric perception.

\subsection{Self-supervised Neuro-Symbolic Learning}

Neuro-symbolic learning combines neural perception with symbolic reasoning to improve interpretability and generalization~\citep{wang2024imperative}. In robotics, symbolic reasoning can include not only logical rules~\citep{wang2024imperative} but also physical principles and geometric optimization, such as bundle adjustment~\citep{zhan2024imatching} and pose graph optimization~\citep{fu2024islam}. Recent imperative learning frameworks~\citep{wang2024imperative} use such reasoning engines as sources of self-supervision, allowing neural modules to learn without explicit labels.

Our work instantiates imperative learning~\citep{wang2024imperative} for point cloud geometry reasoning in LiDAR SLAM. We specialize the reasoning component as a maximum likelihood estimation (MLE) framework for local geometry, where point-wise covariances are learned to explain two types of SLAM consistency: the residuals between corresponding points and the relative poses estimated by the SLAM backend. In this formulation, the neural module predicts Gaussian local geometry, the memory stores SLAM-estimated poses and correspondences, and the likelihoods provide self-supervision by enforcing consistency between them. The resulting local geometry is then fed back into the SLAM pipeline to improve downstream tasks such as odometry and global registration, and the refined SLAM outputs in turn provide cleaner memory for subsequent geometry reasoning. This forms a reciprocal structure in which local geometry and SLAM consistency progressively reinforce each other.

%%%%%%%%%%%%%%%%%%%%%%%%%%%%%%%%%%%%%%%%%%%%%%%%%%%%%%%%%%%%%%%%%%%%%%%%%%%%%%%%%%%%%%%%%%%%%%%%%%%%%%%%%%%%%%%%%%%%%%%%%%%%%%%%%%%%%%%%%%%%%%%%%%%%%%%%%%%%%%%%%%%%%%%%%%%%%%%%%%%%%%%%%%%%%%%%%%%%%%%%%%%%%%%%%%%%%%%%%%%%%%%%%%%%%%%%%%%%%%%%%%%%%%%%%%%%%%%%%%%%%%%%%%%%%%%%%%%%%%%%%%%%%%%%%%%%%%%%%%%%%%%%%%%%%%%%%%%%%%%%%%%%%%%%%%%%%%%%%%%%%%%%%%%%%%%%%%%%%%%%%%%%%%%%%%%%%%%%%%%%%%%%%%%%%%%%%%%%%%%%%%%%%%%%%%%%%%%%%%%%%%%%%%%%%%%%%%%%%%%%%%%%%%%%%%%%%%%%%%%%%%%%%%%%%%%%%%%%%%%%%%%%%%%%%%%%%%%%%%%%%%%%%%%%%%%%

\section{Probabilistic Modeling of Underlying Geometry as a Statistical Manifold}
\label{sec:3}

To formalize this geometry reasoning, we first introduce a probabilistic representation of the underlying geometry~\citep{lee2026learning}.

\begin{figure}
    \centering
    \includegraphics[width=0.9\linewidth]{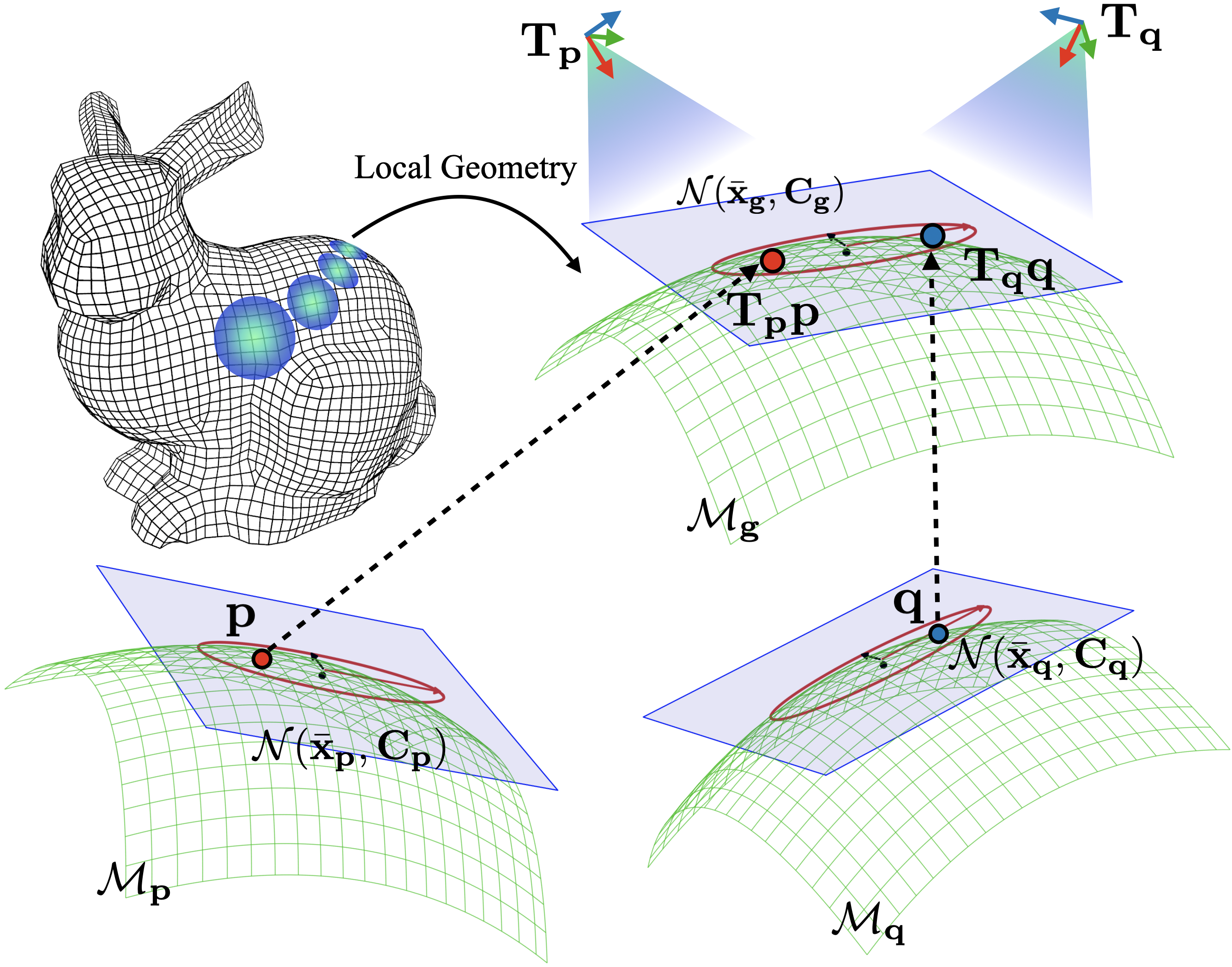}
    \caption{A visualization illustrating how the underlying geometry
can be represented as a statistical manifold. It explains that two
points observed in different coordinate frames can be regarded
as samples drawn from the same underlying distribution.}
    \label{fig:stat_mani}
\end{figure}

\begin{definition}[Statistical Manifold of World Geometry]
The statistical manifold $\mathcal{M}_{\bm{g}}$ represents the underlying world surface manifold through a probabilistic approximation in a global reference frame. The manifold $\mathcal{M}_{\bm{g}}$ is defined as a family of 3D Gaussian distributions, each parameterized by a mean $\bm{\bar{x}}_{\bm{g}_i} \in \mathbb{R}^3$ and a covariance matrix $\bm{C}_{\bm{g}_i} \in \mathbb{S}_{+}^{3}$.
Each Gaussian distribution defines the probability density that an observed point $\bm{x}_i$ is measured with respect to a local tangent plane at $\bm{\bar{x}}_{\bm{g}_i}$.
\end{definition}

When $\bm{x}_i$ is acquired from a high-precision LiDAR sensor, the resulting Gaussian distribution approximates the local planar geometry. In this formulation, the covariance matrix $\bm{C}_{\bm{g}_i}$ provides a quantitative representation of the geometry in the neighborhood of $\bm{\bar{x}}_{\bm{g}_i}$, as shown in Fig.~\ref{fig:stat_mani}.

\begin{proposition}[Statistical Manifold Representation from Different Sensor Frames]
Given the sensor pose $\bm{T}_{\bm{p}} = (\bm{R}_{\bm{p}}, \bm{t}_{\bm{p}}) \in SE(3)$ relative to the global reference frame, the parameters $\bm{\bar{x}}_{\bm{p}_i}$ and $\bm{C}_{\bm{p}_i}$ of the local statistical manifold $\mathcal{M}_{\bm{p}}$ are defined by transforming the global model $\mathcal{M}_{\bm{g}}$ into the sensor frame via $\bm{T}_{\bm{p}}^{-1}$:
\begin{equation*}
    \bm{\bar{x}}_{\bm{p}_i} \triangleq \bm{R}_{\bm{p}}^\top (\bm{\bar{x}}_{\bm{g}_i} - \bm{t}_{\bm{p}}), \qquad
    \bm{C}_{\bm{p}_i} \triangleq \bm{R}_{\bm{p}}^\top\bm{C}_{\bm{g}_i}\bm{R}_{\bm{p}}.
\end{equation*}
Analogously, for a second observation frame with pose $\bm{T}_{\bm{q}} = (\bm{R}_{\bm{q}}, \bm{t}_{\bm{q}}) \in SE(3)$, the parameters $\bm{\bar{x}}_{\bm{q}_i}$ and $\bm{C}_{\bm{q}_i}$ of the statistical manifold $\mathcal{M}_{\bm{q}}$ are defined via $\bm{T}_{\bm{q}}^{-1}$:
\begin{equation*}
    \bm{\bar{x}}_{\bm{q}_i} \triangleq \bm{R}_{\bm{q}}^\top (\bm{\bar{x}}_{\bm{g}_i} - \bm{t}_{\bm{q}}), \qquad
    \bm{C}_{\bm{q}_i} \triangleq \bm{R}_{\bm{q}}^\top \bm{C}_{\bm{g}_i}\bm{R}_{\bm{q}}.
\end{equation*}
\end{proposition}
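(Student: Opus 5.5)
The proposition is phrased as a definition ("$\triangleq$"). What has to be justified is that the transformed parameters are exactly the mean and covariance of the pushforward of each global Gaussian under the change of coordinates $\bm{T}_{\bm{p}}^{-1}$. Given that, $\mathcal{M}_{\bm{p}}$ is again a family of 3D Gaussians of the form required by the Definition. My plan is to model a point measured in the global frame as a random variable $\bm{x}_{\bm{g}} \sim \mathcal{N}(\bm{\bar{x}}_{\bm{g}_i}, \bm{C}_{\bm{g}_i})$. The same physical point expressed in the sensor frame with pose $\bm{T}_{\bm{p}}=(\bm{R}_{\bm{p}},\bm{t}_{\bm{p}})$ is then $\bm{x}_{\bm{p}} = \bm{T}_{\bm{p}}^{-1}\bm{x}_{\bm{g}} = \bm{R}_{\bm{p}}^\top(\bm{x}_{\bm{g}} - \bm{t}_{\bm{p}})$, which follows from $\bm{x}_{\bm{g}} = \bm{R}_{\bm{p}}\bm{x}_{\bm{p}} + \bm{t}_{\bm{p}}$ and $\bm{R}_{\bm{p}}^{-1}=\bm{R}_{\bm{p}}^\top$ for $\bm{R}_{\bm{p}}\in SO(3)$.

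First, I will use the fact that an affine map $\bm{y} = \bm{A}\bm{x} + \bm{b}$ sends $\mathcal{N}(\bm{\mu},\bm{\Sigma})$ to $\mathcal{N}(\bm{A}\bm{\mu}+\bm{b}, \bm{A}\bm{\Sigma}\bm{A}^\top)$. Taking $\bm{A}=\bm{R}_{\bm{p}}^\top$ and $\bm{b}=-\bm{R}_{\bm{p}}^\top\bm{t}_{\bm{p}}$ gives the mean $\mathbb{E}[\bm{x}_{\bm{p}}] = \bm{R}_{\bm{p}}^\top(\bm{\bar{x}}_{\bm{g}_i}-\bm{t}_{\bm{p}})$ by linearity of expectation. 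The covariance is
\begin{equation*}
\mathbb{E}\big[(\bm{x}_{\bm{p}}-\mathbb{E}\bm{x}_{\bm{p}})(\bm{x}_{\bm{p}}-\mathbb{E}\bm{x}_{\bm{p}})^\top\big] = \bm{R}_{\bm{p}}^\top \bm{C}_{\bm{g}_i}\bm{R}_{\bm{p}},
\end{equation*}
where the translation cancels when the mean is subtracted. One could instead verify Gaussianity directly by change of variables in the density: the Jacobian determinant is $|\det \bm{R}_{\bm{p}}|=1$, and the Mahalanobis form becomes $(\bm{x}_{\bm{p}}-\bm{\bar{x}}_{\bm{p}_i})^\top(\bm{R}_{\bm{p}}^\top\bm{C}_{\bm{g}_i}\bm{R}_{\bm{p}})^{-1}(\bm{x}_{\bm{p}}-\bm{\bar{x}}_{\bm{p}_i})$, using $(\bm{R}^\top\bm{C}\bm{R})^{-1}=\bm{R}^\top\bm{C}^{-1}\bm{R}$.

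Second, I will check that $\bm{C}_{\bm{p}_i}\in\mathbb{S}^3_+$. It is symmetric, and $\bm{v}^\top\bm{R}_{\bm{p}}^\top\bm{C}_{\bm{g}_i}\bm{R}_{\bm{p}}\bm{v} = (\bm{R}_{\bm{p}}\bm{v})^\top\bm{C}_{\bm{g}_i}(\bm{R}_{\bm{p}}\bm{v})\ge 0$. The eigenvalues are preserved and the eigenvectors are rotated, so the tangent-plane interpretation of the covariance from the Definition carries over: the normal direction $\bm{n}$ becomes $\bm{R}_{\bm{p}}^\top\bm{n}$. The statement for $\bm{T}_{\bm{q}}$ is the identical argument with $\bm{p}$ replaced by $\bm{q}$. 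As a consistency remark, I would also note that composing the two transforms gives $\bm{\bar{x}}_{\bm{q}_i} = \bm{R}_{\bm{q}}^\top\bm{R}_{\bm{p}}\bm{\bar{x}}_{\bm{p}_i} + \bm{R}_{\bm{q}}^\top(\bm{t}_{\bm{p}}-\bm{t}_{\bm{q}})$, and the analogous formula for the covariances. This is the relation later used to compare observations across frames.

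There is no real obstacle here, since every step is elementary linear algebra. The only point needing care is the convention for $\bm{T}_{\bm{p}}$: it maps sensor coordinates to global coordinates, so the inverse is what appears. I would state this convention explicitly at the outset so that the signs of $\bm{t}_{\bm{p}}$ and the placement of the transposes are unambiguous.
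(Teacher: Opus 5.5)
Your argument is correct. There is no paper proof to compare it against: the paper treats this proposition as purely definitional (the parameters are simply posited with $\triangleq$) and gives no proof.

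What you add is the justification that these definitions are exactly the mean and covariance of the pushforward of $\mathcal{N}(\bm{\bar{x}}_{\bm{g}_i},\bm{C}_{\bm{g}_i})$ under the rigid change of coordinates $\bm{T}_{\bm{p}}^{-1}$. The paper uses this fact only implicitly. In the proof of Theorem~1 it writes $\bm{p}_i=\bm{R}_{\bm{p}}^\top(\bm{\bar{x}}_{\bm{g}_i}-\bm{t}_{\bm{p}})+\bm{\epsilon}_{\bm{p}_i}$. It then relies on the same composition you record in your consistency remark: $\bm{R}=\bm{R}_{\bm{q}}^\top\bm{R}_{\bm{p}}$, $\bm{t}=\bm{R}_{\bm{q}}^\top(\bm{t}_{\bm{p}}-\bm{t}_{\bm{q}})$, and $\bm{R}\bm{C}_{\bm{p}_i}\bm{R}^\top=\bm{C}_{\bm{q}_i}$.

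Two minor cautions:
\begin{enumerate}
\item Your change-of-variables alternative needs $\bm{C}_{\bm{g}_i}$ to be invertible. Since $\mathbb{S}^3_+$ admits singular (exactly planar) covariances, the affine-pushforward argument you lead with is the one that covers the general case.
\item The pushforward identifies distributions, not samples. Theorem~1 treats $\bm{p}_i$ and $\bm{q}_i$ as independent draws from the two transformed Gaussians, which is why the residual has covariance $2\bm{C}_{\bm{q}_i}$ rather than vanishing. So your phrase ``the same physical point'' should be read at the level of the distribution, not as a single shared sample.
\end{enumerate}
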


\begin{theorem}[Residual Distribution]
\label{theorem1}
Let $\mathcal{P}\triangleq(\bm{p}_1,\dots,\bm{p}_N)$ and
$\mathcal{Q}\triangleq(\bm{q}_1,\dots,\bm{q}_N)$ denote two scans acquired
from frames $\bm{T}_{\bm{p}}$ and $\bm{T}_{\bm{q}}$, respectively, with known
correspondences $(\bm{p}_i,\bm{q}_i)$. Each point is independently sampled as
a noisy observation from its local Gaussian distribution, i.e.,
$\bm{p}_i \sim \mathcal{N}(\bm{\bar{x}}_{\bm{p}_i}, \bm{C}_{\bm{p}_i})$
and
$\bm{q}_i \sim \mathcal{N}(\bm{\bar{x}}_{\bm{q}_i}, \bm{C}_{\bm{q}_i})$,
with per-point covariances
$\bm{C}_{\bm{p}} \triangleq (\bm{C}_{\bm{p}_1},\dots,\bm{C}_{\bm{p}_N})$
and
$\bm{C}_{\bm{q}} \triangleq (\bm{C}_{\bm{q}_1},\dots,\bm{C}_{\bm{q}_N})$.
Since $\mathcal{P}$ and $\mathcal{Q}$ originate from manifolds related by an
isometric rigid-body transformation of the same underlying world surface, they
share identical local geometry, i.e., $\bm{C}_{\bm{g}_i}$. Under this local
geometric model, the displacement $\bm{d}_i$ for the $i$-th correspondence,
defined by the relative transformation
$\bm{T}\triangleq \bm{T}_{\bm{q}}^{-1}\bm{T}_{\bm{p}}$, satisfies
\begin{equation}
\label{eq:residual}
\bm{d}_i \triangleq \bm{q}_i - \bm{T}\bm{p}_i,
\qquad
\bm{d}_i \sim \mathcal{N}\!\left(\bm{0},\,2\bm{C}_{\bm{q}_i}\right).
\end{equation}
\end{theorem}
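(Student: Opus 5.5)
The proof is a direct computation with affine maps of Gaussians, using the Proposition to express both local manifolds in terms of the common global parameters $\bm{\bar{x}}_{\bm{g}_i}$ and $\bm{C}_{\bm{g}_i}$. I write the action of $\bm{T}=\bm{T}_{\bm{q}}^{-1}\bm{T}_{\bm{p}}$ on a point as $\bm{T}\bm{x} = \bm{R}\bm{x}+\bm{t}$, with $\bm{R}=\bm{R}_{\bm{q}}^\top\bm{R}_{\bm{p}}$ and $\bm{t}=\bm{R}_{\bm{q}}^\top(\bm{t}_{\bm{p}}-\bm{t}_{\bm{q}})$.

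\textbf{Step 1: the mean is zero.} I first show that $\bm{T}$ maps $\mathcal{M}_{\bm{p}}$ onto $\mathcal{M}_{\bm{q}}$ at the level of the parameters. Substituting $\bm{\bar{x}}_{\bm{p}_i}=\bm{R}_{\bm{p}}^\top(\bm{\bar{x}}_{\bm{g}_i}-\bm{t}_{\bm{p}})$ gives
\begin{equation*}
\bm{R}\bm{\bar{x}}_{\bm{p}_i}+\bm{t}
=\bm{R}_{\bm{q}}^\top(\bm{\bar{x}}_{\bm{g}_i}-\bm{t}_{\bm{p}})+\bm{R}_{\bm{q}}^\top(\bm{t}_{\bm{p}}-\bm{t}_{\bm{q}})
=\bm{\bar{x}}_{\bm{q}_i}.
\end{equation*}
Since $\bm{p}_i\sim\mathcal{N}(\bm{\bar{x}}_{\bm{p}_i},\bm{C}_{\bm{p}_i})$ and $\bm{T}$ is affine, $\bm{T}\bm{p}_i$ is Gaussian with mean $\bm{\bar{x}}_{\bm{q}_i}$ and covariance $\bm{R}\bm{C}_{\bm{p}_i}\bm{R}^\top$. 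Because $\bm{q}_i$ also has mean $\bm{\bar{x}}_{\bm{q}_i}$, the residual satisfies $\mathbb{E}[\bm{d}_i]=\bm{0}$.

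\textbf{Step 2: the covariance is $2\bm{C}_{\bm{q}_i}$.} Next I show the transported covariance equals $\bm{C}_{\bm{q}_i}$:
\begin{equation*}
\bm{R}\bm{C}_{\bm{p}_i}\bm{R}^\top
=\bm{R}_{\bm{q}}^\top\bm{R}_{\bm{p}}\bigl(\bm{R}_{\bm{p}}^\top\bm{C}_{\bm{g}_i}\bm{R}_{\bm{p}}\bigr)\bm{R}_{\bm{p}}^\top\bm{R}_{\bm{q}}
=\bm{R}_{\bm{q}}^\top\bm{C}_{\bm{g}_i}\bm{R}_{\bm{q}}
=\bm{C}_{\bm{q}_i}.
\end{equation*}
This identity is exactly where the hypothesis that both scans share the same $\bm{C}_{\bm{g}_i}$ is used. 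By the theorem's assumption, $\bm{p}_i$ and $\bm{q}_i$ are independent, so $\bm{q}_i$ and $\bm{T}\bm{p}_i$ are independent Gaussians. Their difference $\bm{d}_i=\bm{q}_i-\bm{T}\bm{p}_i$ is therefore Gaussian with covariance $\bm{C}_{\bm{q}_i}+\bm{C}_{\bm{q}_i}=2\bm{C}_{\bm{q}_i}$. Together with Step 1, this gives \eqref{eq:residual}.

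\textbf{Main obstacle.} The algebra is routine; the delicate point is interpretation. One must take $\bm{T}$ to be the true relative transformation, treated as deterministic and not itself estimated from the noisy scans. One must also read the correspondence $(\bm{p}_i,\bm{q}_i)$ as two independent samples tied to the same global Gaussian $i$. If $\bm{T}$ were instead estimated from the same data, independence would fail and extra covariance terms would appear. I will state these modeling conventions explicitly before doing the computation.
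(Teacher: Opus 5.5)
Your proof is correct and follows essentially the same route as the paper. The paper writes $\bm{p}_i$ and $\bm{q}_i$ as their means plus independent perturbations $\bm{\epsilon}_{\bm{p}_i},\bm{\epsilon}_{\bm{q}_i}$, cancels the deterministic terms to get $\bm{d}_i=\bm{\epsilon}_{\bm{q}_i}-\bm{R}\bm{\epsilon}_{\bm{p}_i}$, and then computes $\bm{C}_{\bm{q}_i}+\bm{R}\bm{C}_{\bm{p}_i}\bm{R}^\top=2\bm{C}_{\bm{q}_i}$ exactly as in your Step~2; your pushforward phrasing (matching means, matching transported covariances, then differencing independent Gaussians) is the same computation, with the minor benefit of stating Gaussianity of $\bm{d}_i$ explicitly rather than leaving it implicit.
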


\noindent\textit{Proof.}
Let $\bm{T}=(\bm{R},\bm{t})\triangleq \bm{T}_{\bm{q}}^{-1}\bm{T}_{\bm{p}}$, where
$\bm{R}=\bm{R}_{\bm{q}}^\top\bm{R}_{\bm{p}}$ and
$\bm{t}=\bm{R}_{\bm{q}}^\top(\bm{t}_{\bm{p}}-\bm{t}_{\bm{q}})$.
Using the Gaussian perturbation form, we write
\begin{equation*}
    \bm{p}_i
    =
    \bm{R}_{\bm{p}}^\top
    (\bm{\bar{x}}_{\bm{g}_i}-\bm{t}_{\bm{p}})
    + \bm{\epsilon}_{\bm{p}_i},
    \qquad
    \bm{q}_i
    =
    \bm{R}_{\bm{q}}^\top
    (\bm{\bar{x}}_{\bm{g}_i}-\bm{t}_{\bm{q}})
    + \bm{\epsilon}_{\bm{q}_i},
\end{equation*}
where
\begin{equation*}
    \bm{\epsilon}_{\bm{p}_i}
    \sim
    \mathcal{N}(\bm{0},\bm{C}_{\bm{p}_i}),
    \qquad
    \bm{\epsilon}_{\bm{q}_i}
    \sim
    \mathcal{N}(\bm{0},\bm{C}_{\bm{q}_i}),
\end{equation*}
and the two perturbations are independent. Then,
\begin{align*}
    \bm{d}_i
    &\triangleq
    \bm{q}_i - \bm{T}\bm{p}_i \nonumber \\
    &=
    \bm{R}_{\bm{q}}^\top
    (\bm{\bar{x}}_{\bm{g}_i}-\bm{t}_{\bm{q}})
    -
    \left(
    \bm{R}\bm{R}_{\bm{p}}^\top
    (\bm{\bar{x}}_{\bm{g}_i}-\bm{t}_{\bm{p}})
    + \bm{t}
    \right)
    +
    \bm{\epsilon}_{\bm{q}_i}
    -
    \bm{R}\bm{\epsilon}_{\bm{p}_i}.
\end{align*}
Since $\bm{R}=\bm{R}_{\bm{q}}^\top\bm{R}_{\bm{p}}$ and
$\bm{t}=\bm{R}_{\bm{q}}^\top(\bm{t}_{\bm{p}}-\bm{t}_{\bm{q}})$, the deterministic terms cancel, yielding
\begin{equation*}
    \bm{d}_i
    =
    \bm{\epsilon}_{\bm{q}_i}
    -
    \bm{R}\bm{\epsilon}_{\bm{p}_i}.
\end{equation*}
Therefore,
\begin{equation*}
    \mathbb{E}[\bm{d}_i]=\bm{0},
\end{equation*}
and, by independence (i.e., sampling independently),
\begin{align*}
    \mathrm{Cov}[\bm{d}_i]
    &=
    \bm{C}_{\bm{q}_i}
    +
    \bm{R}\bm{C}_{\bm{p}_i}\bm{R}^\top \nonumber \\
    &=
    \bm{R}_{\bm{q}}^\top\bm{C}_{\bm{g}_i}\bm{R}_{\bm{q}}
    +
    \bm{R}_{\bm{q}}^\top\bm{R}_{\bm{p}}
    \left(
    \bm{R}_{\bm{p}}^\top
    \bm{C}_{\bm{g}_i}
    \bm{R}_{\bm{p}}
    \right)
    \bm{R}_{\bm{p}}^\top\bm{R}_{\bm{q}} \nonumber \\
    &=
    2\bm{R}_{\bm{q}}^\top
    \bm{C}_{\bm{g}_i}
    \bm{R}_{\bm{q}}
    =
    2\bm{C}_{\bm{q}_i}.
\end{align*}
Hence,
\begin{equation*}
    \bm{d}_i
    \sim
    \mathcal{N}\!\left(\bm{0},\,2\bm{C}_{\bm{q}_i}\right),
\end{equation*}
which proves Eq.~\eqref{eq:residual}. \hfill$\square$

\begin{corollary}[Distribution of a Point]
\label{coro:reparam}
Given the relative transformation $\bm{T}$ and source point $\bm{p}_i$, the target point $\bm{q}_i$
can be represented as
\begin{equation*}
    \bm{q}_i = \bm{T}\bm{p}_i + \bm{d}_i.
\end{equation*}
From Theorem~\ref{theorem1}, the residual follows
\begin{equation*}
    \bm{d}_i \sim \mathcal{N}(\bm{0}, 2\bm{C}_{\bm{q}_i}).
\end{equation*}
Therefore, conditioned on $\bm{p}_i$ and $\bm{T}$, the target point follows
\begin{equation*}
    \bm{q}_i \mid \bm{p}_i,\bm{T},\bm{C}_{\bm{q}_i}
    \sim
    \mathcal{N}(\bm{T}\bm{p}_i, 2\bm{C}_{\bm{q}_i}).
\end{equation*}
\end{corollary}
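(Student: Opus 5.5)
The plan is to read the corollary off the decomposition built in the proof of Theorem~\ref{theorem1}. The algebraic identity $\bm{q}_i = \bm{T}\bm{p}_i + \bm{d}_i$ holds by the definition $\bm{d}_i \triangleq \bm{q}_i - \bm{T}\bm{p}_i$ in Eq.~\eqref{eq:residual}. The real content is the conditional law of $\bm{q}_i$ given $\bm{p}_i$, $\bm{T}$, and $\bm{C}_{\bm{q}_i}$. This is \emph{not} automatic: the theorem gives only the marginal law of $\bm{d}_i$, and $\bm{d}_i = \bm{\epsilon}_{\bm{q}_i} - \bm{R}\bm{\epsilon}_{\bm{p}_i}$ depends on $\bm{p}_i$ through $\bm{\epsilon}_{\bm{p}_i}$. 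So the key step is to justify treating $\bm{d}_i$ as independent of the conditioning variables, or to state the modelling assumption under which it is.

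I would proceed as follows.
\begin{enumerate}
\item From the proof of Theorem~\ref{theorem1}, write
\[
\bm{q}_i = \bm{T}\bm{p}_i + \bm{\epsilon}_{\bm{q}_i} - \bm{R}\bm{\epsilon}_{\bm{p}_i}.
\]
\item Note that $\bm{T}$ and $\bm{C}_{\bm{q}_i}$ are deterministic parameters, so conditioning on them changes nothing.
\item For $\bm{p}_i$, interpret the conditioning as in the generative/reparameterization sense that the corollary's label suggests. In that reading, $\bm{d}_i$ is an exogenous noise term with the law from Theorem~\ref{theorem1}, drawn independently of $\bm{p}_i$. Then $\bm{q}_i$ is an affine shift of a Gaussian by a fixed vector: given $\bm{p}_i$, $\bm{T}\bm{p}_i$ is a constant, so $\bm{q}_i \sim \mathcal{N}(\bm{T}\bm{p}_i, 2\bm{C}_{\bm{q}_i})$.
\end{enumerate}

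The main obstacle is that, strictly, $\bm{d}_i$ and $\bm{p}_i$ are correlated through $\bm{\epsilon}_{\bm{p}_i}$. Under the model, $(\bm{p}_i,\bm{q}_i)$ is jointly Gaussian, and the exact conditional is
\[
\bm{q}_i \mid \bm{p}_i \sim \mathcal{N}\!\left(\bm{\bar{x}}_{\bm{q}_i},\,\bm{C}_{\bm{q}_i}\right),
\]
since $\bm{\epsilon}_{\bm{q}_i}$ is independent of $\bm{p}_i$. That is not the stated form.

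My first attempt would therefore make the modelling assumption explicit: the reference mean $\bm{\bar{x}}_{\bm{q}_i}$ is unobserved and is replaced by its estimate $\bm{T}\bm{p}_i$, whose own error $-\bm{R}\bm{\epsilon}_{\bm{p}_i}$ has covariance $\bm{C}_{\bm{q}_i}$. This is exactly what Theorem~\ref{theorem1} computed, since $\bm{R}\bm{C}_{\bm{p}_i}\bm{R}^\top = \bm{C}_{\bm{q}_i}$. The two independent contributions then sum to $2\bm{C}_{\bm{q}_i}$, matching the marginal predictive law.

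In the written proof I would phrase the result as a predictive distribution used for likelihood. That means treating $\bm{p}_i$ as a noisy plug-in for the mean, not as a conditioning event in the strict probabilistic sense. The conclusion then follows directly from Theorem~\ref{theorem1} plus this independence assumption.
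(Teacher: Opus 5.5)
Your route is the paper's own. Write $\bm{q}_i=\bm{T}\bm{p}_i+\bm{d}_i$, take the law of $\bm{d}_i$ from Theorem~\ref{theorem1}, and shift by the constant $\bm{T}\bm{p}_i$; like the paper, this treats $\bm{d}_i$ as independent of $\bm{p}_i$. Your caveat is correct and goes beyond the paper, which leaves that assumption unstated: with $\bm{\bar{x}}_{\bm{g}_i}$ fixed, the exact conditional is $\mathcal{N}(\bm{\bar{x}}_{\bm{q}_i},\bm{C}_{\bm{q}_i})$, and the stated law is the predictive distribution under a flat prior on $\bm{\bar{x}}_{\bm{g}_i}$, since then $\bm{\bar{x}}_{\bm{q}_i}\mid\bm{p}_i\sim\mathcal{N}(\bm{T}\bm{p}_i,\bm{C}_{\bm{q}_i})$ and $\bm{d}_i$ really is independent of $\bm{p}_i$, which gives a clean formalization of your plug-in reading.
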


\begin{theorem}[Distribution of Point Cloud]
\label{thm:correspondence}
Assuming the different correspondences are conditionally independent, the conditional likelihood of the target set $\mathcal{Q}$ given the fixed source set $\mathcal{P}$ factorizes as
\begin{equation}
\begin{aligned}
p(\mathcal{Q} \mid &\mathcal{P},\bm{T},\bm{C}_{\bm{q}}) 
=
\prod_{i=1}^{N}p(\bm{q}_i \mid \bm{p}_i,\bm{T},\bm{C}_{\bm{q}_i}) \\
=
\prod_{i=1}^{N}&
\frac{1}{\sqrt{(2\pi)^3\,|2\bm{C}_{\bm{q}_i}|}}
\exp\!\left(
-\frac{1}{2}\,\bm{d}_i^\top(2\bm{C}_{\bm{q}_i})^{-1}\bm{d}_i
\right),
\end{aligned}
\label{eq:correspondence}
\end{equation}
where the last equality follows from Corollary~\ref{coro:reparam}.
\end{theorem}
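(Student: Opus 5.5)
The plan has two parts. First I factorize the joint conditional density using conditional independence. Then I identify each factor with the Gaussian density from Corollary~\ref{coro:reparam}.

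\textbf{Factorization.} By the chain rule,
\begin{equation*}
p(\mathcal{Q}\mid\mathcal{P},\bm{T},\bm{C}_{\bm{q}})=\prod_{i=1}^{N}p(\bm{q}_i\mid \bm{q}_1,\dots,\bm{q}_{i-1},\mathcal{P},\bm{T},\bm{C}_{\bm{q}}).
\end{equation*}
The assumption of conditionally independent correspondences removes the dependence on $\bm{q}_1,\dots,\bm{q}_{i-1}$. I also need each $\bm{q}_i$ to depend on the conditioning variables only through $(\bm{p}_i,\bm{T},\bm{C}_{\bm{q}_i})$. I would justify this from the generative model of Theorem~\ref{theorem1}. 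There, $\bm{q}_i=\bm{T}\bm{p}_i+\bm{d}_i$ with $\bm{d}_i=\bm{\epsilon}_{\bm{q}_i}-\bm{R}\bm{\epsilon}_{\bm{p}_i}$, and the perturbations for distinct indices are sampled independently. So $\bm{d}_i$ involves only the $i$-th perturbations, and other points $\bm{p}_j$ or covariances $\bm{C}_{\bm{q}_j}$ ($j\neq i$) carry no information about it. Each factor therefore reduces to $p(\bm{q}_i\mid\bm{p}_i,\bm{T},\bm{C}_{\bm{q}_i})$, which gives the first equality in Eq.~\eqref{eq:correspondence}.

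\textbf{Main obstacle.} The delicate step is the reduction just described. Strictly, conditioning on the observed $\bm{p}_i$ correlates $\bm{d}_i$ with $\bm{p}_i$, because $\bm{\epsilon}_{\bm{p}_i}$ appears in both. The conditional law of $\bm{q}_i$ given $\bm{p}_i$ is then not literally $\mathcal{N}(\bm{T}\bm{p}_i,2\bm{C}_{\bm{q}_i})$ unless $\bm{\bar{x}}_{\bm{g}_i}$ is unknown. I would not fight this. I would adopt the modeling convention already used in Corollary~\ref{coro:reparam}: treat $\bm{d}_i$ as independent of $\bm{p}_i$, which corresponds to marginalizing over the unknown mean $\bm{\bar{x}}_{\bm{g}_i}$. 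I would state explicitly that the theorem inherits this assumption from the corollary rather than proving it anew.

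\textbf{Gaussian form.} Given the factorization, I substitute the density of $\mathcal{N}(\bm{T}\bm{p}_i,2\bm{C}_{\bm{q}_i})$ from Corollary~\ref{coro:reparam} into each factor, using the standard 3D Gaussian density. Since $\bm{q}_i-\bm{T}\bm{p}_i=\bm{d}_i$, the exponent becomes $-\tfrac12\bm{d}_i^\top(2\bm{C}_{\bm{q}_i})^{-1}\bm{d}_i$ and the normalizer is $\sqrt{(2\pi)^3|2\bm{C}_{\bm{q}_i}|}$. This gives the second equality. For the density to exist I assume $\bm{C}_{\bm{q}_i}$ is positive definite, not merely semidefinite; otherwise the expression is read as a limit or with a regularized covariance.
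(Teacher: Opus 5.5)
Your proposal is correct and follows the paper's own one-line argument. Conditional independence of the correspondences factors the likelihood, and each factor is the $\mathcal{N}(\bm{T}\bm{p}_i,2\bm{C}_{\bm{q}_i})$ density from Corollary~\ref{coro:reparam} evaluated at $\bm{d}_i=\bm{q}_i-\bm{T}\bm{p}_i$. Your caveat is also sound and sharper than the paper, which leaves it implicit: $\bm{q}_i\mid\bm{p}_i\sim\mathcal{N}(\bm{T}\bm{p}_i,2\bm{C}_{\bm{q}_i})$ holds exactly when $\bm{\bar{x}}_{\bm{g}_i}$ is integrated out under a flat prior, whereas a known mean would instead give $\mathcal{N}(\bm{\bar{x}}_{\bm{q}_i},\bm{C}_{\bm{q}_i})$.
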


%%%%%%%%%%%%%%%%%%%%%%%%%%%%%%%%%%%%%%%%%%%%%%%%%%%%%%%%%%%%%%%%%%%%%%%%%%%%%%%%%%%%%%%%%%%%%%%%%%%%%%%%%%%%%%%%%%%%%%%%%%%%%%%%%%%%%%%%%%%%%%%%%%%%%%%%%%%%%%%%%%%%%%%%%%%%%%%%%%%%%%%%%%%%%%%%%%%%%%%%%%%%%%%%%%%%%%%%%%%%%%%%%%%%%%%%%%%%%%%%%%%%%%%%%%%%%%%%%%%%%%%%%%%%%%%%%%%%%%%%%%%%%%%%%%%%%%%%%%%%%%%%%%%%%%%%%%%%%%%%%%%%%%%%%%%%%%%%%%%%%%%%%%%%%%%%%%%%%%%%%%%%%%%%%%%%%%%%%%%%%%%%%%%%%%%%%%%%%%%%%%%%%%%%%%%%%%%%%%%%%%%%%%%%%%%%%%%%%%%%%%%%%%%%%%%%%%%%%%%%%%%%%%%%%%%%%%%%%%%%%%%%%%%%%%%%%%%%%%%%%%%%%%%%%%%%

\section{Maximum Likelihood Estimation of Geometry}

In this section, we introduce two MLE methods for the local geometry (i.e., $\bm{C}_{\bm{q}_i}$).

\subsection{Correspondence Likelihood}
\label{sec:corr_likelihood}

The first estimator follows directly from the likelihood in Theorem~\ref{thm:correspondence}.

\begin{proposition}[Maximize Correspondence Likelihood]
\label{prop:corr_mle}
The maximum likelihood estimator of $\bm{C}_{\bm{q}}$ from Eq.~\eqref{eq:correspondence} is
\begin{equation}
\bm{\hat{C}}_{\bm{q}}
=
\arg\max_{\bm{C}_{\bm{q}}}\;
p(\mathcal{Q} \mid \mathcal{P},\bm{T}=\bm{T}^{\star},\bm{C}_{\bm{q}}).
\label{eq:corr_mle}
\end{equation}
We refer to this formulation as the maximization of correspondence likelihood, as it is defined over given pairs of corresponding points and characterizes the residual between them.
\end{proposition}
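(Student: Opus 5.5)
The statement is essentially a definition of the estimator, so the proof amounts to showing that the MLE problem in Eq.~\eqref{eq:corr_mle} is well posed and that it is exactly the maximizer of the likelihood derived in Theorem~\ref{thm:correspondence}. I will also characterize its solution so that it can be used as a self-supervision signal. The plan is to fix $\bm{T}=\bm{T}^{\star}$ (the SLAM-estimated relative pose stored in memory). With $\bm{T}$ fixed, the residuals $\bm{d}_i=\bm{q}_i-\bm{T}^{\star}\bm{p}_i$ are deterministic functions of the data. By Corollary~\ref{coro:reparam} and the factorization in Theorem~\ref{thm:correspondence}, the likelihood is then a product of zero-mean Gaussian densities in $\bm{d}_i$ with covariance $2\bm{C}_{\bm{q}_i}$.

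Next I take the negative logarithm, which is strictly monotone, so the argmax is unchanged. This gives the objective
\begin{equation*}
\mathcal{L}(\bm{C}_{\bm{q}})=\sum_{i=1}^{N}\Bigl(\tfrac{1}{2}\log|2\bm{C}_{\bm{q}_i}|+\tfrac{1}{4}\bm{d}_i^\top\bm{C}_{\bm{q}_i}^{-1}\bm{d}_i\Bigr)+\text{const}.
\end{equation*}
This objective decouples across $i$, so the maximization over the tuple $\bm{C}_{\bm{q}}$ reduces to $N$ independent problems over $\mathbb{S}_{+}^{3}$. Each one has the form of the standard Gaussian covariance MLE. I reparametrize by the precision $\bm{\Lambda}_i=\bm{C}_{\bm{q}_i}^{-1}$, which makes the per-term objective $-\tfrac12\log|\bm{\Lambda}_i|+\tfrac14\operatorname{tr}(\bm{\Lambda}_i\bm{d}_i\bm{d}_i^\top)$ convex in $\bm{\Lambda}_i$. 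Setting the matrix gradient to zero gives the stationary point $\bm{C}_{\bm{q}_i}=\tfrac12\bm{d}_i\bm{d}_i^\top$.

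The main obstacle is this last step. With a single residual per point, $\tfrac12\bm{d}_i\bm{d}_i^\top$ is rank one, so the unconstrained maximum over positive definite matrices is not attained: the log-determinant diverges. I would address this in one of two ways:
\begin{itemize}
\item restrict $\bm{C}_{\bm{q}_i}$ to a parametric family, namely the output of the neural module, which shares parameters across points, and read Eq.~\eqref{eq:corr_mle} as maximization over that family;
\item pool several correspondences (multiple frame pairs from memory) for the same underlying Gaussian, giving the sample-covariance estimate $\tfrac{1}{2M}\sum_m\bm{d}_i^{(m)}\bm{d}_i^{(m)\top}$, which is full rank generically.
\end{itemize}
Under either reading, convexity in $\bm{\Lambda}_i$ guarantees that the stationary point is the global maximizer. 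I would present the first interpretation as the operative one, since it matches the self-supervised training objective.
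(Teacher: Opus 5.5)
Your core argument matches the paper: fix $\bm{T}=\bm{T}^{\star}$, use the factorization of Theorem~\ref{thm:correspondence}, take the negative logarithm and discard constants. The proposition is essentially a definition, and the paper's only accompanying derivation is the reduction to Eq.~\eqref{eq:corr_nll}. That reduction agrees with your objective term by term, since $\tfrac12\bm{d}_i^{\top}(2\bm{C}_{\bm{q}_i})^{-1}\bm{d}_i=\tfrac14\bm{d}_i^{\top}\bm{C}_{\bm{q}_i}^{-1}\bm{d}_i$. One small correction: in the paper $\bm{T}^{\star}$ is the true pose, and the SLAM estimate is only a practical substitute, under which the residuals are no longer exactly zero-mean.

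The rest of your plan goes beyond the paper and is not needed, because the statement asserts neither existence nor uniqueness of a maximizer. Part of it is also wrong.

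\begin{itemize}
\item Your degeneracy observation is correct. In fact $\tfrac12\bm{d}_i\bm{d}_i^{\top}$ is not even a stationary point: the first-order condition $\bm{\Lambda}_i^{-1}=\tfrac12\bm{d}_i\bm{d}_i^{\top}$ has no positive-definite solution, and the likelihood is unbounded above.
\item Your closing claim fails for the reading you call operative. Once $\bm{C}_{\bm{q}_i}$ is the output of $f_{\bm{\theta}}$, the loss is a convex function of each $\bm{\Lambda}_i$ composed with the nonlinear, parameter-sharing map $\bm{\theta}\mapsto\bm{\Lambda}_i(\bm{\theta})$. It is therefore not convex in $\bm{\theta}$, stationary points need not be global maximizers, and there is no closed-form stationary point to appeal to.
\item What keeps the paper's loss bounded is the floor $\bm{C}_{\bm{q}}=\bm{L}_{\bm{q}}\bm{L}_{\bm{q}}^{\top}+\epsilon\bm{I}$ in Algorithm~\ref{alg:training}. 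In your precision variables this is the convex constraint $\bm{\Lambda}_i\preceq\epsilon^{-1}\bm{I}$. Under it the per-point problem is convex and bounded, and its optimum is attained at the covariance with eigenvalue $\max(\tfrac12\lVert\bm{d}_i\rVert^2,\epsilon)$ along $\bm{d}_i$ and $\epsilon$ orthogonal to it.
\item Your pooling reading also needs care. By the proof of Theorem~\ref{theorem1}, $\bm{d}_i=\bm{\epsilon}_{\bm{q}_i}-\bm{R}\bm{\epsilon}_{\bm{p}_i}$, so residuals that share a target point are correlated through $\bm{\epsilon}_{\bm{q}_i}$. In addition, covariances from different target frames must first be rotated into a common frame before they can be pooled.
\end{itemize}
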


Taking the negative log-likelihood of Eq.~\eqref{eq:corr_mle} and discarding constants reduces the MLE to
\begin{equation}
\bm{\hat{C}}_{\bm{q}}
=
\arg\min_{\bm{C}_{\bm{q}}}\;
\tfrac{1}{2}\sum_{i=1}^{N}
\left[
\log\lvert 2\bm{C}_{\bm{q}_{i}}\rvert
+
\bm{d}_{i}^{\star\top}
(2\bm{C}_{\bm{q}_{i}})^{-1}
\bm{d}^{\star}_{i}
\right],
\label{eq:corr_nll}
\end{equation}
where $\bm{d}^{\star}_{i} = \bm{q}_{i} - \bm{T}^{\star}\bm{p}_{i}$. In practice, the ground-truth pose $\bm{T}^{\star}$ is rarely available; instead, it can be obtained from LiDAR SLAM.

\subsection{Pose Likelihood}
\label{sec:pose_likelihood}

The second estimator takes a different view. We ask: \emph{given a candidate $\bm{C}_{\bm{q}}$, does the pose implied by the correspondence likelihood agree with the observed pose $\bm{\tilde{T}}$ (e.g., from LiDAR SLAM)?} To formalize this, we first define the pose implied by the correspondence likelihood for a given covariance set:
\begin{equation}
\bm{T}(\bm{C}_{\bm{q}})
\triangleq
\arg\max_{\bm{T}}\;
p(\mathcal{Q} \mid \mathcal{P},\bm{T},\bm{C}_{\bm{q}}),
\label{eq:tc_def}
\end{equation}
which is equivalent to the weighted least-squares problem
\begin{equation*}
\bm{T}(\bm{C}_{\bm{q}})
=
\arg\min_{\bm{T}}\;
\sum_{i=1}^{N}
\bm{d}_{i}(\bm{T})^{\top}(2\bm{C}_{\bm{q}_{i}})^{-1}\bm{d}_{i}(\bm{T}).
\end{equation*}
We now characterize the sampling distribution of $\bm{T}(\bm{C}_{\bm{q}})$.

\begin{proposition}[Distribution of the Correspondence-Implied Pose]
\label{prop:tc_dist}
The pose estimator in Eq.~\eqref{eq:tc_def} is the weighted least-squares MLE of $\bm{T}^{\star}$. By the standard asymptotic theory of MLE,
\begin{equation*}
\bm{T}(\bm{C}_{\bm{q}})
=
\bm{T}^{\star}\mathrm{Exp}(\bm{\eta}),
\quad
\bm{\eta} \sim \mathcal{N}\!\left(\bm{0},\,\bm{H}^{-1}(\bm{C}_{\bm{q}})\right),
\label{eq:tc_dist}
\end{equation*}
where $\mathrm{Exp}(\cdot)$ denotes the exponential map of $SE(3)$, which lifts a $\mathbb{R}^{6}$ Lie algebra perturbation to a relative transformation, and $\bm{H}(\bm{C}_{\bm{q}})$ is the Hessian of the weighted least-squares objective at $\bm{T}(\bm{C}_{\bm{q}})$.
\end{proposition}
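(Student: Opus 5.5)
The argument identifies the estimator in Eq.~\eqref{eq:tc_def} with a Gaussian MLE. It then linearizes the first-order optimality condition on the Lie algebra of $SE(3)$ around $\bm{T}^{\star}$ and reads off the covariance of the resulting perturbation $\bm{\eta}$.

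First, by Corollary~\ref{coro:reparam} and Theorem~\ref{thm:correspondence}, the negative log-likelihood in $\bm{T}$ is, up to constants,
\begin{equation*}
f(\bm{T}) = \tfrac{1}{2}\sum_{i=1}^{N}\bm{d}_i(\bm{T})^{\top}\bm{W}_i\,\bm{d}_i(\bm{T}), \qquad \bm{W}_i \triangleq (2\bm{C}_{\bm{q}_i})^{-1}.
\end{equation*}
The $\log|2\bm{C}_{\bm{q}_i}|$ terms do not depend on $\bm{T}$, so maximizing the likelihood over $\bm{T}$ is exactly the weighted least-squares problem. This justifies calling $\bm{T}(\bm{C}_{\bm{q}})$ the weighted least-squares MLE.

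Next, I parametrize $\bm{T}=\bm{T}^{\star}\mathrm{Exp}(\bm{\xi})$ and write the residual as $\bm{d}_i(\bm{\xi}) \approx \bm{d}^{\star}_i - \bm{J}_i\bm{\xi}$. Here $\bm{J}_i$ is the Jacobian of $\bm{T}^{\star}\mathrm{Exp}(\bm{\xi})\bm{p}_i$ at $\bm{\xi}=\bm{0}$, namely $\bm{R}^{\star}[\bm{I}\;\; -[\bm{p}_i]_\times]$. Setting the gradient of the quadratic model to zero gives
\begin{equation*}
\bm{\eta} = \bm{H}^{-1}\sum_i \bm{J}_i^{\top}\bm{W}_i\bm{d}^{\star}_i, \qquad \bm{H}=\sum_i \bm{J}_i^{\top}\bm{W}_i\bm{J}_i,
\end{equation*}
which is the Gauss--Newton Hessian. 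Theorem~\ref{theorem1} gives $\bm{d}^{\star}_i\sim\mathcal{N}(\bm{0},\bm{W}_i^{-1})$ independently across $i$. Hence $\bm{\eta}$ is a linear map of independent Gaussians, with mean zero and covariance
\begin{equation*}
\bm{H}^{-1}\Big(\sum_i \bm{J}_i^{\top}\bm{W}_i\bm{W}_i^{-1}\bm{W}_i\bm{J}_i\Big)\bm{H}^{-1}=\bm{H}^{-1}.
\end{equation*}
This sandwich-collapse step is the Fisher-information identity specialized to Gaussians: $\bm{H}$ equals the Fisher information because the weights are the true inverse residual covariances.

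The main obstacle is rigor rather than algebra. The map $\bm{\xi}\mapsto\bm{d}_i$ is nonlinear on $SE(3)$, so the Gaussian law for $\bm{\eta}$ holds only to first order, equivalently asymptotically as $N\to\infty$. The statement also evaluates $\bm{H}$ at $\bm{T}(\bm{C}_{\bm{q}})$ rather than at $\bm{T}^{\star}$.

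I would handle this by invoking standard MLE asymptotics: consistency of $\bm{T}(\bm{C}_{\bm{q}})$, plus a Taylor expansion of the score around $\bm{T}^{\star}$. The second-order terms are $o_p(N^{-1/2})$. The Hessian at $\bm{T}(\bm{C}_{\bm{q}})$ differs from that at $\bm{T}^{\star}$ by a vanishing relative amount, and the Hessian terms involving $\bm{d}_i$ times the second derivative of the residual have mean zero and are negligible relative to $\bm{J}^{\top}\bm{W}\bm{J}$. Together these justify replacing $\bm{H}(\bm{T}^{\star})$ with $\bm{H}(\bm{C}_{\bm{q}})$ in the stated distribution. I would also need $\bm{H}$ to be invertible, i.e., non-degenerate scene geometry, and I would state this as an implicit assumption.
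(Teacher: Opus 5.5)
Your proposal is correct and takes the paper's route: the paper only appeals to standard MLE asymptotics, and your Gauss--Newton linearization about $\bm{T}^{\star}\mathrm{Exp}(\bm{\xi})$, with the sandwich collapsing to the inverse Fisher information, is exactly the computation that appeal stands for. Two details should be stated explicitly. First, $\bm{H}$ must be the Hessian of the negative log-likelihood $\tfrac{1}{2}\sum_i\bm{d}_i^{\top}(2\bm{C}_{\bm{q}_i})^{-1}\bm{d}_i$; the paper's displayed weighted least-squares objective lacks the $\tfrac{1}{2}$, and taking its Hessian literally makes $\bm{H}^{-1}$ half the actual covariance. Second, the sandwich step should rest on the model conditional on $\mathcal{P}$ (Corollary~\ref{coro:reparam}, Theorem~\ref{thm:correspondence}) rather than on Theorem~\ref{theorem1}: under the latter's generative model, $\bm{p}_i$ shares the noise $\bm{\epsilon}_{\bm{p}_i}$ with $\bm{d}_i^{\star}$, so $\bm{J}_i$ and $\bm{H}$ would be random and correlated with $\bm{d}_i^{\star}$ and could not be treated as constants.
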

We additionally adopt a Gaussian pose observation model for $\bm{\tilde{T}}$, which is typically obtained from a measurement (e.g., pose from LiDAR SLAM):
\begin{equation}
\bm{\tilde{T}} = \bm{T}^{\star}\mathrm{Exp}(\bm{\xi}),
\qquad
\bm{\xi} \sim \mathcal{N}(\bm{0},\bm{\Gamma}),
\label{eq:pose_obs}
\end{equation}
where $\bm{\xi} \in \mathbb{R}^{6}$ is the pose noise in the Lie algebra and $\bm{\Gamma}$ is its covariance. Combining Eq.~\eqref{eq:pose_obs} and Proposition~\ref{prop:tc_dist} yields the distribution of the discrepancy between the correspondence-implied pose and the observed pose.
\begin{proposition}[Distribution of the Pose Discrepancy]
\label{prop:pose_discrepancy}
Under the independence of the pose measurement noise $\bm{\xi}$ and the estimation noise $\bm{\eta}$, the discrepancy between $\bm{T}(\bm{C}_{\bm{q}})$ and $\bm{\tilde{T}}$ in the Lie algebra satisfies
\begin{equation*}
\mathrm{Log}\!\left(\bm{T}(\bm{C}_{\bm{q}})^{-1}\bm{\tilde{T}}\right)
\approx
\bm{\xi} - \bm{\eta}
\sim
\mathcal{N}\!\left(\bm{0},\,\bm{W}(\bm{C}_{\bm{q}})\right),
\label{eq:disc_dist}
\end{equation*}
where the approximation follows from the first-order Baker--Campbell--Hausdorff (BCH) expansion, and
\begin{equation*}
\bm{W}(\bm{C}_{\bm{q}})
\triangleq
\bm{\Gamma} + \bm{H}^{-1}(\bm{C}_{\bm{q}}).
\label{eq:W_def}
\end{equation*}
\end{proposition}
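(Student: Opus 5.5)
Substitute the two perturbation models into the discrepancy, apply the BCH expansion to first order, and read off the Gaussian from independence.

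\textbf{Step 1 (substitution).} From Proposition~\ref{prop:tc_dist} we have $\bm{T}(\bm{C}_{\bm{q}})=\bm{T}^{\star}\mathrm{Exp}(\bm{\eta})$, so its inverse is $\mathrm{Exp}(-\bm{\eta})\bm{T}^{\star -1}$. From Eq.~\eqref{eq:pose_obs} we have $\bm{\tilde{T}}=\bm{T}^{\star}\mathrm{Exp}(\bm{\xi})$. The unknown ground truth $\bm{T}^{\star}$ then cancels exactly:
\begin{equation*}
\bm{T}(\bm{C}_{\bm{q}})^{-1}\bm{\tilde{T}}=\mathrm{Exp}(-\bm{\eta})\,\mathrm{Exp}(\bm{\xi}).
\end{equation*}
This cancellation is the analogue of the cancellation of deterministic terms in the proof of Theorem~\ref{theorem1}. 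It holds because both perturbations are right-multiplicative in the same frame.

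\textbf{Step 2 (BCH linearization).} Apply $\mathrm{Log}$ and expand to first order:
\begin{equation*}
\mathrm{Log}\!\left(\mathrm{Exp}(-\bm{\eta})\mathrm{Exp}(\bm{\xi})\right)=\bm{\xi}-\bm{\eta}+\tfrac{1}{2}[-\bm{\eta}^{\wedge},\bm{\xi}^{\wedge}]^{\vee}+\cdots.
\end{equation*}
The commutator and all higher terms are second order in the perturbations and are dropped, assuming $\bm{\xi}$ and $\bm{\eta}$ are small. This gives $\bm{\xi}-\bm{\eta}$.

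This is the only genuinely approximate step, and it is where I expect the main obstacle. The statement should be read as first-order accurate, with error $O(\|\bm{\eta}\|\|\bm{\xi}\|)$. The error is controlled because $\bm{H}^{-1}(\bm{C}_{\bm{q}})$ shrinks as $N$ grows, per the asymptotic regime invoked in Proposition~\ref{prop:tc_dist}, and because $\bm{\Gamma}$ is assumed small for a reasonable SLAM pose. An exact version would require the $SE(3)$ Jacobians $\bm{J}_r^{-1}$, which equal the identity at first order.

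\textbf{Step 3 (distribution).} We have $\bm{\xi}\sim\mathcal{N}(\bm{0},\bm{\Gamma})$ and $\bm{\eta}\sim\mathcal{N}(\bm{0},\bm{H}^{-1}(\bm{C}_{\bm{q}}))$, and the two are independent by hypothesis. A linear combination of independent Gaussians is Gaussian, so:
\begin{itemize}
\item the mean is $\mathbb{E}[\bm{\xi}-\bm{\eta}]=\bm{0}$;
\item the covariance is $\mathrm{Cov}[\bm{\xi}]+\mathrm{Cov}[\bm{\eta}]=\bm{\Gamma}+\bm{H}^{-1}(\bm{C}_{\bm{q}})$, because the cross terms vanish by independence and the sign flip on $\bm{\eta}$ leaves its covariance unchanged.
\end{itemize}
This is exactly $\bm{W}(\bm{C}_{\bm{q}})$, completing the argument.

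A minor point to note is that $\bm{H}$ is evaluated at $\bm{T}(\bm{C}_{\bm{q}})$ rather than at $\bm{T}^{\star}$. At the same first order this substitution is harmless, since the two points differ by $O(\bm{\eta})$.
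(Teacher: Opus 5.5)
Your proposal is correct and follows the route the paper indicates. The paper gives no separate proof; it only states that the result comes from combining Eq.~\eqref{eq:pose_obs} with Proposition~\ref{prop:tc_dist} via first-order BCH, which is exactly what you do: the shared right-perturbation structure cancels $\bm{T}^{\star}$, first-order BCH reduces $\mathrm{Log}(\mathrm{Exp}(-\bm{\eta})\mathrm{Exp}(\bm{\xi}))$ to $\bm{\xi}-\bm{\eta}$, and independence gives covariance $\bm{\Gamma}+\bm{H}^{-1}(\bm{C}_{\bm{q}})$.
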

Here, $\mathrm{Log}(\cdot)$ denotes the logarithm map of $SE(3)$, which maps a relative transformation to its $\mathbb{R}^{6}$ Lie algebra representation.

\begin{corollary}[Distribution of the Observed Pose]
\label{coro:tilde_T_dist}
Given the correspondence-implied pose $\bm{T}(\bm{C}_{\bm{q}})$ and the discrepancy covariance $\bm{W}(\bm{C}_{\bm{q}})$, the observed pose $\bm{\tilde{T}}$ can be represented as
\begin{equation*}
\bm{\tilde{T}} = \bm{T}(\bm{C}_{\bm{q}})\,\mathrm{Exp}(\bm{\mu}),
\end{equation*}
where, from Proposition~\ref{prop:pose_discrepancy}, the discrepancy follows
\begin{equation*}
\bm{\mu} \sim \mathcal{N}(\bm{0},\bm{W}(\bm{C}_{\bm{q}})).
\end{equation*}
Therefore, conditioned on $\bm{T}(\bm{C}_{\bm{q}})$ and $\bm{W}(\bm{C}_{\bm{q}})$, the observed pose follows
\begin{equation*}
\bm{\tilde{T}} \mid \bm{T}(\bm{C}_{\bm{q}}),\bm{W}(\bm{C}_{\bm{q}})
\sim
\mathcal{N}\!\left(\bm{T}(\bm{C}_{\bm{q}}),\,\bm{W}(\bm{C}_{\bm{q}})\right),
\end{equation*}
in the sense that its Lie algebra perturbation about $\bm{T}(\bm{C}_{\bm{q}})$ is Gaussian.
\end{corollary}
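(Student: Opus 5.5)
The plan is to mirror the argument of Corollary~\ref{coro:reparam}, with the Lie algebra playing the role of $\mathbb{R}^3$. The observed pose $\bm{\tilde{T}}$ will be treated as the target, the correspondence-implied pose $\bm{T}(\bm{C}_{\bm{q}})$ as the known anchor, and Proposition~\ref{prop:pose_discrepancy} will supply the law of the residual.

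\textbf{Step 1 (reparametrization).} Define the residual
\begin{equation*}
\bm{\mu} \triangleq \mathrm{Log}\!\left(\bm{T}(\bm{C}_{\bm{q}})^{-1}\bm{\tilde{T}}\right) \in \mathbb{R}^{6}.
\end{equation*}
Assume the relative transformation lies in the injectivity domain of $\mathrm{Log}$, which holds because the discrepancy is small. Then $\mathrm{Exp}(\bm{\mu}) = \bm{T}(\bm{C}_{\bm{q}})^{-1}\bm{\tilde{T}}$. Left-multiplying by $\bm{T}(\bm{C}_{\bm{q}})$ gives the identity $\bm{\tilde{T}} = \bm{T}(\bm{C}_{\bm{q}})\,\mathrm{Exp}(\bm{\mu})$, which is an exact algebraic rewriting.

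\textbf{Step 2 (distribution of $\bm{\mu}$).} By Proposition~\ref{prop:pose_discrepancy}, $\bm{\mu}\approx\bm{\xi}-\bm{\eta}$. Using the independence of $\bm{\xi}\sim\mathcal{N}(\bm{0},\bm{\Gamma})$ and $\bm{\eta}\sim\mathcal{N}(\bm{0},\bm{H}^{-1}(\bm{C}_{\bm{q}}))$, we get $\mathbb{E}[\bm{\mu}]=\bm{0}$ and $\mathrm{Cov}[\bm{\mu}]=\bm{\Gamma}+\bm{H}^{-1}(\bm{C}_{\bm{q}})=\bm{W}(\bm{C}_{\bm{q}})$. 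Since $\bm{\mu}$ is a linear combination of independent Gaussians, it is Gaussian, so $\bm{\mu}\sim\mathcal{N}(\bm{0},\bm{W}(\bm{C}_{\bm{q}}))$ to first order.

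\textbf{Step 3 (conditioning).} Treat $\bm{T}(\bm{C}_{\bm{q}})$ and $\bm{W}(\bm{C}_{\bm{q}})$ as fixed. The map $\bm{\mu}\mapsto\bm{T}(\bm{C}_{\bm{q}})\mathrm{Exp}(\bm{\mu})$ then pushes the Gaussian on $\mathbb{R}^6$ forward to a concentrated Gaussian on $SE(3)$ centered at $\bm{T}(\bm{C}_{\bm{q}})$. This is exactly the meaning of the stated notation $\mathcal{N}(\bm{T}(\bm{C}_{\bm{q}}),\bm{W}(\bm{C}_{\bm{q}}))$, namely a right-perturbation Gaussian in the Lie algebra.

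\textbf{Main obstacle.} The delicate point is the conditioning in Step 3. Unconditionally, $\bm{\eta}$ is the randomness that defines $\bm{T}(\bm{C}_{\bm{q}})$ itself, so conditioning on $\bm{T}(\bm{C}_{\bm{q}})$ in principle fixes $\bm{\eta}$. I would handle this the same way Corollary~\ref{coro:reparam} implicitly handles $\bm{p}_i$: adopt the usual ``inverse'' reading of the location–scale model in which $\bm{T}^{\star}$ is unknown. Under a flat prior on $\bm{T}^{\star}$, the relation $\bm{\tilde{T}}=\bm{T}(\bm{C}_{\bm{q}})\mathrm{Exp}(\bm{\mu})$ with $\bm{\mu}$ distributed as $\bm{\xi}-\bm{\eta}$ yields the stated law to first order in BCH.

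I would state explicitly that the result is first-order, inheriting the approximation of Proposition~\ref{prop:pose_discrepancy}. I would also note that the ordering of factors in the BCH composition is harmless at this order.
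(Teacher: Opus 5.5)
Your proposal is correct and follows the paper's argument, which likewise just rewrites $\bm{\tilde{T}}=\bm{T}(\bm{C}_{\bm{q}})\,\mathrm{Exp}(\bm{\mu})$, takes $\bm{\mu}\sim\mathcal{N}(\bm{0},\bm{W}(\bm{C}_{\bm{q}}))$ from Proposition~\ref{prop:pose_discrepancy}, and reads the result as a Lie-algebra Gaussian about $\bm{T}(\bm{C}_{\bm{q}})$, in direct analogy with Corollary~\ref{coro:reparam}. The conditioning subtlety you flag is real and the paper passes over it: with $\bm{T}^{\star}$ fixed, a given $\bm{T}(\bm{C}_{\bm{q}})$ pins down $\bm{\eta}$. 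Your flat-prior predictive reading resolves it to first order, and so does observing that the discrepancy $\bm{\mu}$ has a $\bm{T}^{\star}$-free law, which is all Proposition~\ref{prop:pose_mle} needs.
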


\begin{proposition}[Maximize Pose Likelihood]
\label{prop:pose_mle}
The maximum likelihood estimator of $\bm{C}_{\bm{q}}$ from the distribution in Corollary~\ref{coro:tilde_T_dist} is
\begin{equation}
\bm{\hat{C}}_{\bm{q}}
=
\arg\max_{\bm{C}_{\bm{q}}}\;
p\!\left(\bm{\tilde{T}} \mid \bm{T}(\bm{C}_{\bm{q}}),\bm{W}(\bm{C}_{\bm{q}})\right).
\label{eq:pose_mle}
\end{equation}
We refer to this as the maximization of pose likelihood, as it is defined over the discrepancy between the correspondence-implied pose $\bm{T}(\bm{C}_{\bm{q}})$ and the observed pose $\bm{\tilde{T}}$.
\end{proposition}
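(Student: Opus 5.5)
The claim in Proposition~\ref{prop:pose_mle} is essentially definitional: it asserts that the pose-likelihood MLE of $\bm{C}_{\bm{q}}$ is the argmax of the Gaussian density of $\bm{\tilde{T}}$ from Corollary~\ref{coro:tilde_T_dist}. My plan is to write that density explicitly, confirm that it is a well-defined function of $\bm{C}_{\bm{q}}$ alone once $(\mathcal{P},\mathcal{Q},\bm{\tilde{T}})$ are fixed, and then reduce the argmax to a negative log-likelihood minimization. This mirrors the passage from Eq.~\eqref{eq:corr_mle} to Eq.~\eqref{eq:corr_nll} for the correspondence likelihood.

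\emph{Step 1: dependence on $\bm{C}_{\bm{q}}$.} For fixed data, the inner weighted least-squares problem Eq.~\eqref{eq:tc_def} defines a map $\bm{C}_{\bm{q}}\mapsto\bm{T}(\bm{C}_{\bm{q}})$. The Hessian at that solution defines $\bm{C}_{\bm{q}}\mapsto\bm{H}(\bm{C}_{\bm{q}})$. Hence $\bm{W}(\bm{C}_{\bm{q}})=\bm{\Gamma}+\bm{H}^{-1}(\bm{C}_{\bm{q}})$ is a function of $\bm{C}_{\bm{q}}$ only. Since $\bm{\Gamma}\succ 0$, $\bm{W}$ is positive definite, so the density below is well defined.

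\emph{Step 2: the density.} By Corollary~\ref{coro:tilde_T_dist}, the residual $\bm{\mu}=\mathrm{Log}(\bm{T}(\bm{C}_{\bm{q}})^{-1}\bm{\tilde{T}})$ is $\mathcal{N}(\bm{0},\bm{W}(\bm{C}_{\bm{q}}))$ in $\mathbb{R}^6$. Its density is
\begin{equation*}
\frac{1}{\sqrt{(2\pi)^6|\bm{W}|}}\exp\!\left(-\tfrac12\bm{\mu}^\top\bm{W}^{-1}\bm{\mu}\right).
\end{equation*}
Because $\log$ is monotone, maximizing this density over $\bm{C}_{\bm{q}}$ is equivalent to minimizing
\begin{equation*}
\tfrac12\left[\log|\bm{W}(\bm{C}_{\bm{q}})|+\bm{\mu}(\bm{C}_{\bm{q}})^\top\bm{W}(\bm{C}_{\bm{q}})^{-1}\bm{\mu}(\bm{C}_{\bm{q}})\right].
\end{equation*}
This is the pose-likelihood analogue of Eq.~\eqref{eq:corr_nll}, and it establishes Eq.~\eqref{eq:pose_mle} as a genuine MLE.

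\emph{Main obstacle.} The delicate point is justifying that the density is taken with respect to a fixed base measure, so that the argmax is meaningful. The observation $\bm{\tilde{T}}$ lives on $SE(3)$, while $\bm{\mu}$ lives in $\mathbb{R}^6$ and depends on $\bm{C}_{\bm{q}}$ through the reference point $\bm{T}(\bm{C}_{\bm{q}})$. The change of variables $\bm{\tilde{T}}\mapsto\bm{\mu}$ therefore carries an $SE(3)$ Jacobian factor. I would handle this by invoking the same first-order (BCH) regime already used in Proposition~\ref{prop:pose_discrepancy}. In that regime the Jacobian is the identity up to higher order in $\bm{\mu}$, so it contributes no $\bm{C}_{\bm{q}}$-dependence to leading order. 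The likelihood is then understood, as in Corollary~\ref{coro:tilde_T_dist}, as a Gaussian on the Lie-algebra perturbation.

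I would also note that Proposition~\ref{prop:pose_mle} asserts no uniqueness of the maximizer. Existence can be argued if $\bm{C}_{\bm{q}}$ ranges over a compact set of positive definite matrices with bounded eigenvalues. Otherwise the statement is simply the definition of the estimator, and nothing beyond Steps 1 and 2 needs to be proved.
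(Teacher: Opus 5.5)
Your proposal is correct and matches the paper: the proposition is definitional, and the paper's only supporting argument is your Step 2, which takes the negative log of the Gaussian density on the Lie-algebra perturbation $\bm{\mu}$ and discards constants to obtain Eq.~\eqref{eq:pose_nll}. Your additional remarks go beyond what the paper addresses but are consistent with how Corollary~\ref{coro:tilde_T_dist} defines the likelihood; these are the $SE(3)$ base-measure issue, which you dispose of in the same first-order BCH regime, and the existence of a maximizer.
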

Taking the negative log-likelihood of Eq.~\eqref{eq:pose_mle} and discarding constants reduces the MLE to
\begin{equation}
\bm{\hat{C}}_{\bm{q}}
=
\arg\min_{\bm{C}_{\bm{q}}}\;
\tfrac{1}{2}\,\bm{\mu}(\bm{C}_{\bm{q}})^{\top}\bm{W}(\bm{C}_{\bm{q}})^{-1}\bm{\mu}(\bm{C}_{\bm{q}})
\,+\,
\tfrac{1}{2}\log\lvert\bm{W}(\bm{C}_{\bm{q}})\rvert,
\label{eq:pose_nll}
\end{equation}
where $\bm{\mu}(\bm{C}_{\bm{q}}) \triangleq \mathrm{Log}\!\left(\bm{T}(\bm{C}_{\bm{q}})^{-1}\bm{\tilde{T}}\right)$.

%%%%%%%%%%%%%%%%%%%%%%%%%%%%%%%%%%%%%%%%%%%%%%%%%%%%%%%%%%%%%%%%%%%%%%%%%%%%%%%%%%%%%%%%%%%%%%%%%%%%%%%%%%%%%%%%%%%%%%%%%%%%%%%%%%%%%%%%%%%%%%%%%%%%%%%%%%%%%%%%%%%%%%%%%%%%%%%%%%%%%%%%%%%%%%%%%%%%%%%%%%%%%%%%%%%%%%%%%%%%%%%%%%%%%%%%%%%%%%%%%%%%%%%%%%%%%%%%%%%%%%%%%%%%%%%%%%%%%%%%%%%%%%%%%%%%%%%%%%%%%%%%%%%%%%%%%%%%%%%%%%%%%%%%%%%%%%%%%%%%%%%%%%%%%%%%%%%%%%%%%%%%%%%%%%%%%%%%%%%%%%%%%%%%%%%%%%%%%%%%%%%%%%%%%%%%%%%%%%%%%%%%%%%%%%%%%%%%%%%%%%%%%%%%%%%%%%%%%%%%%%%%%%%%%%%%%%%%%%%%%%%%%%%%%%%%%%%%%%%%%%%%%%%%%%%%
\section{Geometry Reasoning for LiDAR SLAM}

Building on the above likelihood-based formulation, we now instantiate geometry reasoning within a LiDAR SLAM pipeline. The proposed framework consists of three components: a \textit{neural module} that predicts local geometry from raw point clouds, a \textit{reasoning} module that evaluates geometric likelihoods, and a shared \textit{memory} that stores the data exchanged between them. These components are coupled through three procedures operating at different scales: Algorithm~\ref{alg:training} trains the neural module from a fixed memory snapshot; Algorithm~\ref{alg:downstream} uses the trained module to refine the memory through SLAM; and Algorithm~\ref{alg:reciprocal} alternates the two until convergence. We describe each in turn.\\

\noindent\textbf{Memory.}
We define a memory module $\mathbf{M}$ as
\begin{equation*}
\mathbf{M} \triangleq
\bigl\{
\{\mathcal{S}_k\}_{k=1}^{K},\;
\hat{\mathcal{T}},\;
\mathcal{C}
\bigr\},
\label{eq:memory}
\end{equation*}
where $\mathcal{S}_k$ is the $k$-th LiDAR scan. Given a scan-pair graph
$\mathcal{E}$, the pose and correspondence memories are defined as
\begin{equation*}
\hat{\mathcal{T}}
\triangleq
\{\hat{\bm{T}}_{k,l}\}_{(k,l)\in\mathcal{E}},
\qquad
\mathcal{C}
\triangleq
\{\mathcal{C}_{k,l}\}_{(k,l)\in\mathcal{E}},
\end{equation*}
where $\hat{\bm{T}}_{k,l}\in SE(3)$ is the estimated relative pose from
$\mathcal{S}_k$ to $\mathcal{S}_l$, and
$\mathcal{C}_{k,l}=\{(\bm{p}_i,\bm{q}_i)\mid
\bm{p}_i\in\mathcal{S}_k,\bm{q}_i\in\mathcal{S}_l\}$ is the corresponding
point-pair set. The memory provides scans, poses, and correspondences to the
neural and spatial reasoning modules, and is updated by the SLAM module
$\mathcal{F}_{\mathrm{SLAM}}$ during reciprocal learning.\\

\begin{algorithm}[t]
\caption{Self-supervised Training of Geometry Reasoning}
\label{alg:training}
\begin{algorithmic}[1]
\Require Neural module $f_{\bm{\theta}}$, memory $\mathbf{M}=(\{\mathcal{S}_k\}_{k=1}^{K},\hat{\mathcal{T}},\mathcal{C})$, hyperparams $\Omega=\{\eta,I,\lambda_{\mathrm{corr}},\lambda_{\mathrm{pose}},\epsilon\}$
\Ensure Trained neural net $f_{\bm{\theta}^{\star}}$
\For{$i=1$ to $I$}
    \State $\mathcal{L}(\bm{\theta})\leftarrow 0$
    \For{each pair $(k,l)\in\mathcal{E}$}
        \State $\mathcal{P}\leftarrow\mathcal{S}_k,\;\mathcal{Q}\leftarrow\mathcal{S}_l$
        \State $\hat{\bm{T}}_{k,l}\leftarrow\hat{\mathcal{T}}_{k,l},\quad
        \mathcal{C}_{k,l}\leftarrow\mathcal{C}(k,l)$
        \State {\textcolor{blue}{/* $\mathcal{Q}\to\bm{C}_{\bm{q}}$ via Cholesky parameterization */}}
        \State $\bm{L}_{\bm{q}}\leftarrow f_{\bm{\theta}}(\mathcal{Q})$, $\bm{C}_{\bm{q}}\leftarrow
        \bm{L}_{\bm{q}}\bm{L}_{\bm{q}}^{\top}+\epsilon\bm{I}$
        \State {\textcolor{blue}{/* Correspondence likelihood, Eq.~(\ref{eq:corr_nll}) */}}
        \State $\mathcal{L}_{\mathrm{corr}}\leftarrow
        \sum_i
        \left[
        \log\lvert 2\bm{C}_{\bm{q}_i}\rvert+
        \bm{\tilde d}_i^{\top}
        (2\bm{C}_{\bm{q}_i})^{-1}
        \bm{\tilde d}_i
        \right]$
        \State {\textcolor{blue}{/* Pose likelihood, Eq.~(\ref{eq:pose_nll}) */}}
        \State $\bm{T}(\bm{C}_{\bm{q}})\leftarrow
        \texttt{SDPRLayer}(\bm{C}_{\bm{q}},\mathcal{C}_{k,l},\mathcal{P},\mathcal{Q})$
        \State $\mathcal{L}_{\mathrm{pose}}\leftarrow
        \bm{\mu}^{\top}\bm{W}^{-1}\bm{\mu}
        +
        \log\lvert\bm{W}\rvert$
        \State $\mathcal{L}(\bm{\theta})\mathrel{+}=
        \lambda_{\mathrm{corr}}\mathcal{L}_{\mathrm{corr}}
        +
        \lambda_{\mathrm{pose}}\mathcal{L}_{\mathrm{pose}}$
         \State $\bm{\theta}\leftarrow
    \bm{\theta}-\eta\,\nabla_{\bm{\theta}}\mathcal{L}(\bm{\theta})$
    \EndFor
\EndFor
\State \Return $f_{\bm{\theta}^{\star}}$
\end{algorithmic}
\end{algorithm}

\begin{algorithm}[t]
\caption{SLAM with Reasoned Geometry}
\label{alg:downstream}
\begin{algorithmic}[1]
\Require Memory $\mathbf{M}=(\{\mathcal{S}_k\},\hat{\mathcal{T}},\mathcal{C})$, trained net $f_{\bm{\theta}^{\star}}$, SLAM module $\mathcal{F}_{\mathrm{SLAM}}$, hyperparams $\Omega=\{\tau_{\mathrm{ani}},M\}$
\Ensure Updated memory $\mathbf{M}^{+}=(\{\mathcal{S}^{+}_k\},\hat{\mathcal{T}}^{+},\mathcal{C}^{+})$
\For{$k=1$ to $K$}
   \State {\textcolor{blue}{/* Construct positive-definite covariance */}} 
   \State $\bm{L}_{k}\leftarrow f_{\bm{\theta}^{\star}}(\mathcal{S}_k)$, $\bm{C}_k\leftarrow \bm{L}_{k}\bm{L}_{k}^{\top}+\epsilon\bm{I}$
    \State $\mathcal{S}^{+}_k\leftarrow\emptyset$
    \For{each point $\bm{s}_{k,i}\in\mathcal{S}_k$}
        \State Compute anisotropy $a_{k,i}$ from $\bm{C}_{k,i}$
        \State {\textcolor{blue}{/* Symbolic-grounded outlier rejection */}}
        \If{$a_{k,i}>\tau_{\mathrm{ani}}$}
            \State Sample $\{\bm{s}_{k,i}^{(m)}\}_{m=1}^{M}\sim\mathcal{N}(\bm{s}_{k,i},\bm{C}_{k,i})$
            \State $\mathcal{S}^{+}_k\leftarrow\mathcal{S}^{+}_k\cup\{\bm{s}_{k,i}^{(m)}\}_{m=1}^{M}$
        \EndIf
    \EndFor
\EndFor
\State {\textcolor{blue}{/* Memory update */}}
\State $\hat{\mathcal{T}}^{+}\leftarrow\mathcal{F}_{\mathrm{SLAM}}(\{\mathcal{S}^{+}_k\})$
\State $\mathcal{C}^+ \gets \texttt{UpdateCorrespondence}(\hat{\mathcal{T}}^+,\mathcal{C})$
\State \Return $\mathbf{M}^{+}=(\mathcal{S}_k,\hat{\mathcal{T}}^{+},\mathcal{C}^{+})$
\end{algorithmic}
\end{algorithm}

\noindent\textbf{Self-supervised training of the neural module.}
Given the current memory $\mathbf{M}$, Algorithm~\ref{alg:training} trains $f_{\bm{\theta}}$ to predict point-wise covariances that are consistent with the trajectory and correspondences stored in memory. The training signal is composed of two likelihoods. The (\emph{i}) correspondence likelihood (line~8) asks whether the predicted covariance statistically explains the residual between corresponding points, following the Gaussian distribution established in Proposition~\ref{prop:corr_mle}. The (\emph{ii}) pose likelihood (line~11) asks whether the trajectory implied by the predicted covariances agrees with the trajectory already in memory, following Proposition~\ref{prop:pose_mle}. Evaluating the pose likelihood requires the global optimum of a weighted least-squares problem, as well as gradients through that optimum. We obtain both using the SDPR layer~\citep{holmes2025sdprlayers}, which leverages the implicit function theorem (IFT) to differentiate through the optimization problem (line~10).

\begin{remark}[Self-supervision from Symbolic Consistency]
No ground-truth covariance label is required: the supervision arises entirely from the consistency between the predicted geometry, correspondences, and trajectory.
\end{remark}

\begin{algorithm}[t]
\caption{Reciprocal Learning}
\label{alg:reciprocal}
\begin{algorithmic}[1]
\Require Raw scans $\{\mathcal{S}_k\}_{k=1}^{K}$, initial net $f_{\bm{\theta}^{(0)}}$, SLAM module $\mathcal{F}_{\mathrm{SLAM}}$, hyperparams $\Omega$, threshold $\delta$
\Ensure Refined memory $\mathbf{M}^{\star}$, trained net $f_{\bm{\theta}^{\star}}$
\State {\textcolor{blue}{/* Initialize memory via SLAM */}}
\State $(\hat{\mathcal{T}}^{(0)},\mathcal{C}^{(0)})\leftarrow\mathcal{F}_{\mathrm{SLAM}}(\{\mathcal{S}_k\})$
\State $\mathbf{M}^{(0)}\leftarrow(\{\mathcal{S}_k\},\hat{\mathcal{T}}^{(0)},\mathcal{C}^{(0)})$
\State $r\leftarrow 0$
\While{not converged}
    \State {\textcolor{blue}{/* Update neural module from memory */}}
    \State $f_{\bm{\theta}^{(r+1)}}\leftarrow\textbf{Alg.~\ref{alg:training}}(f_{\bm{\theta}^{(r)}},\mathbf{M}^{(r)},\Omega)$
    \State {\textcolor{blue}{/* Update memory via reasoning */}}
    \State $\mathbf{M}^{(r+1)}\leftarrow\textbf{Alg.~\ref{alg:downstream}}(\mathbf{M}^{(r)},f_{\bm{\theta}^{(r+1)}},\mathcal{F}_{\mathrm{SLAM}},\Omega)$
    \State \textbf{if} $\|\hat{\mathcal{T}}^{(r+1)}-\hat{\mathcal{T}}^{(r)}\|<\delta$ \textbf{then} converged
    \State $r\leftarrow r+1$
\EndWhile
\State \Return $\mathbf{M}^{\star}=\mathbf{M}^{(r)},\;f_{\bm{\theta}^{\star}}=f_{\bm{\theta}^{(r)}}$
\end{algorithmic}
\end{algorithm}

\begin{figure}[t]
\centering
\begin{subfigure}[t]{0.95\columnwidth}
    \centering
    \includegraphics[
        width=\linewidth,
        trim=15 15 15 15,
        clip
    ]{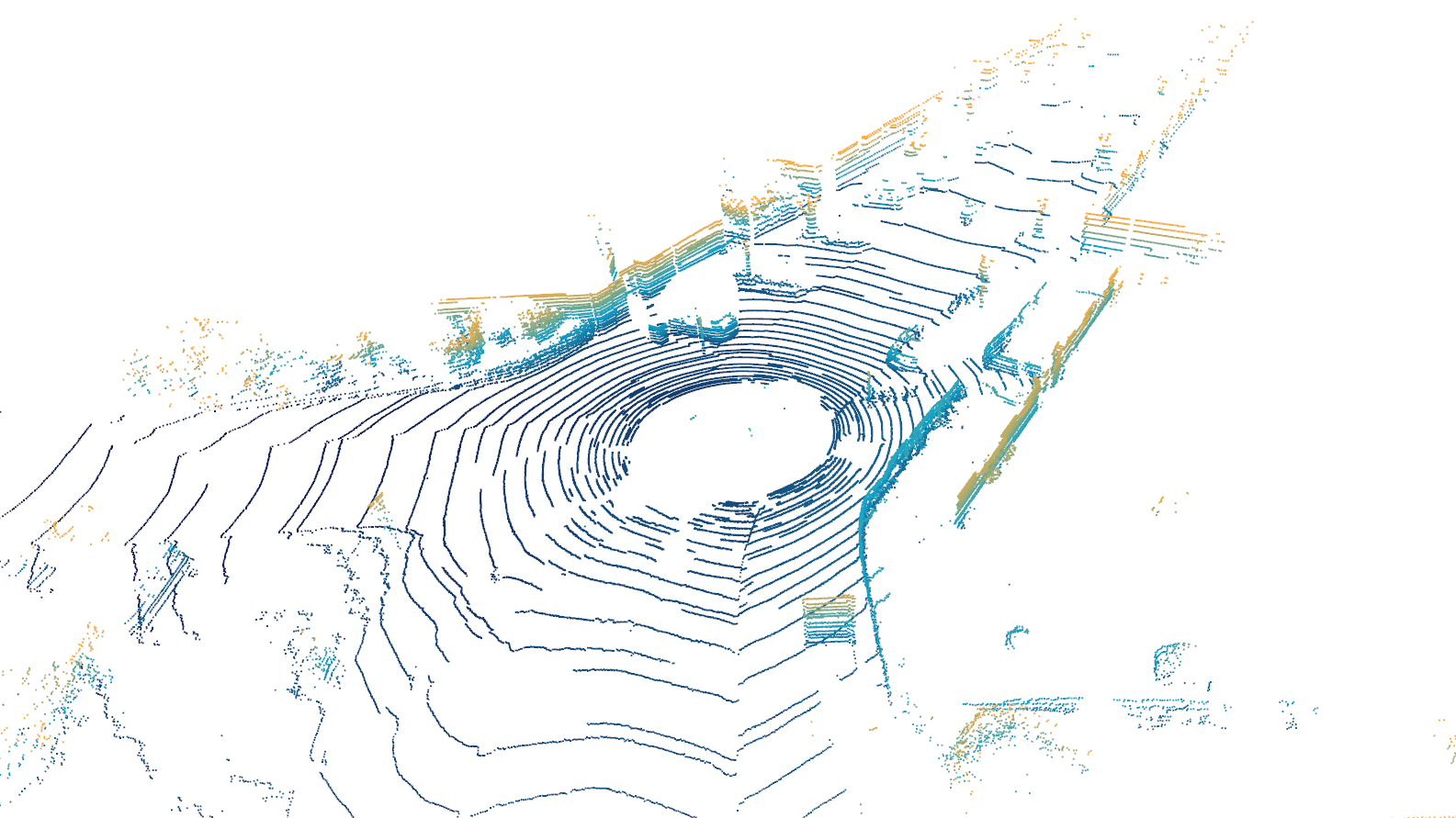}
    \caption{Raw scan}
    \label{fig:raw}
\end{subfigure}

\vspace{1mm}

\begin{subfigure}[t]{0.95\columnwidth}
    \centering
    \includegraphics[
        width=\linewidth,
        trim=15 15 15 15,
        clip
    ]{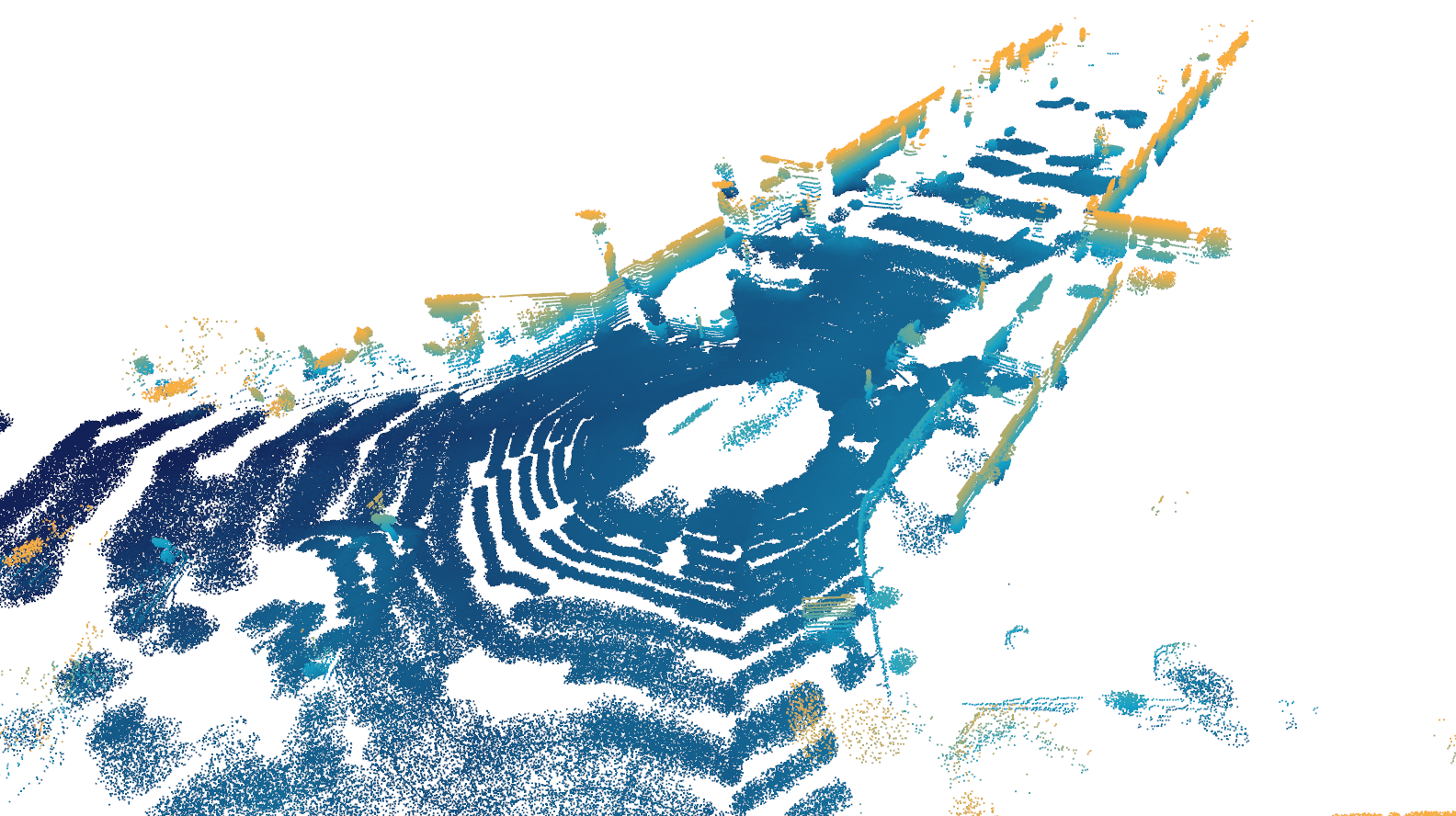}
    \caption{Densified scan}
    \label{fig:dense}
\end{subfigure}

\caption{Raw and densified point clouds.
(a) Raw 32-channel scan.
(b) Densified scan from learned covariance.}
\label{fig:densify}
\end{figure}

\begin{figure}[t]
    \centering
    \includegraphics[width=0.98\linewidth, trim= 20 30 20 0, clip]{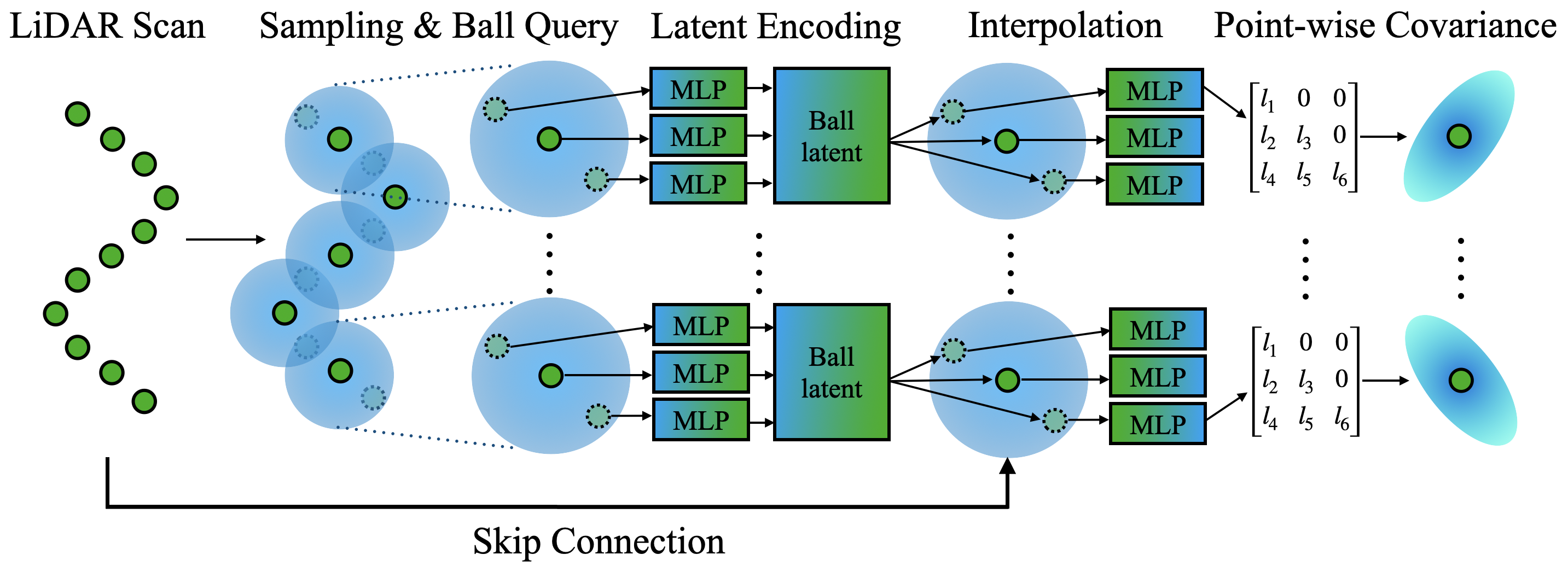}
    \caption{Network architecture of covariance estimator.}
    \label{fig:network_architecture}
\end{figure}

\noindent\textbf{SLAM with reasoned geometry.}
Once $f_{\bm{\theta}^{\star}}$ is trained, Algorithm~\ref{alg:downstream} applies it to every scan and refines the memory accordingly. For each scan, the network predicts per-point covariances (line~3), and each point is filtered by its anisotropy (line~8). As established in Section~\ref{sec:3}, points sampled from local surfaces under high-precision LiDAR induce anisotropic covariances; near-isotropic predictions therefore indicate that the Gaussian model is not faithful to the underlying geometry. Retained points are resampled from the predicted distributions (line~9) to form geometry-enhanced scans, qualitatively shown in Fig.~\ref{fig:densify}. These scans are then passed to $\mathcal{F}_{\mathrm{SLAM}}$ to update the trajectory and correspondences (lines~15--16). Note that the SLAM module is treated as a black box, so any LiDAR SLAM pipeline can be plugged in.\\

\noindent\textbf{Reciprocal learning.}
Algorithms~\ref{alg:training} and~\ref{alg:downstream} depend on each other: the neural module is trained from the memory, and the memory is refined using the neural module. Algorithm~\ref{alg:reciprocal} resolves this circularity by alternating between the two, starting from an initial memory obtained by running SLAM on the raw scans (lines~2--3). Each iteration updates the neural module (line~7) and then the memory (line~9), and the loop terminates when the trajectory change falls below $\delta$ (line~10).

\begin{figure}[t]
\centering
\begin{subfigure}[t]{0.48\columnwidth}
    \centering
    \includegraphics[width=\linewidth]{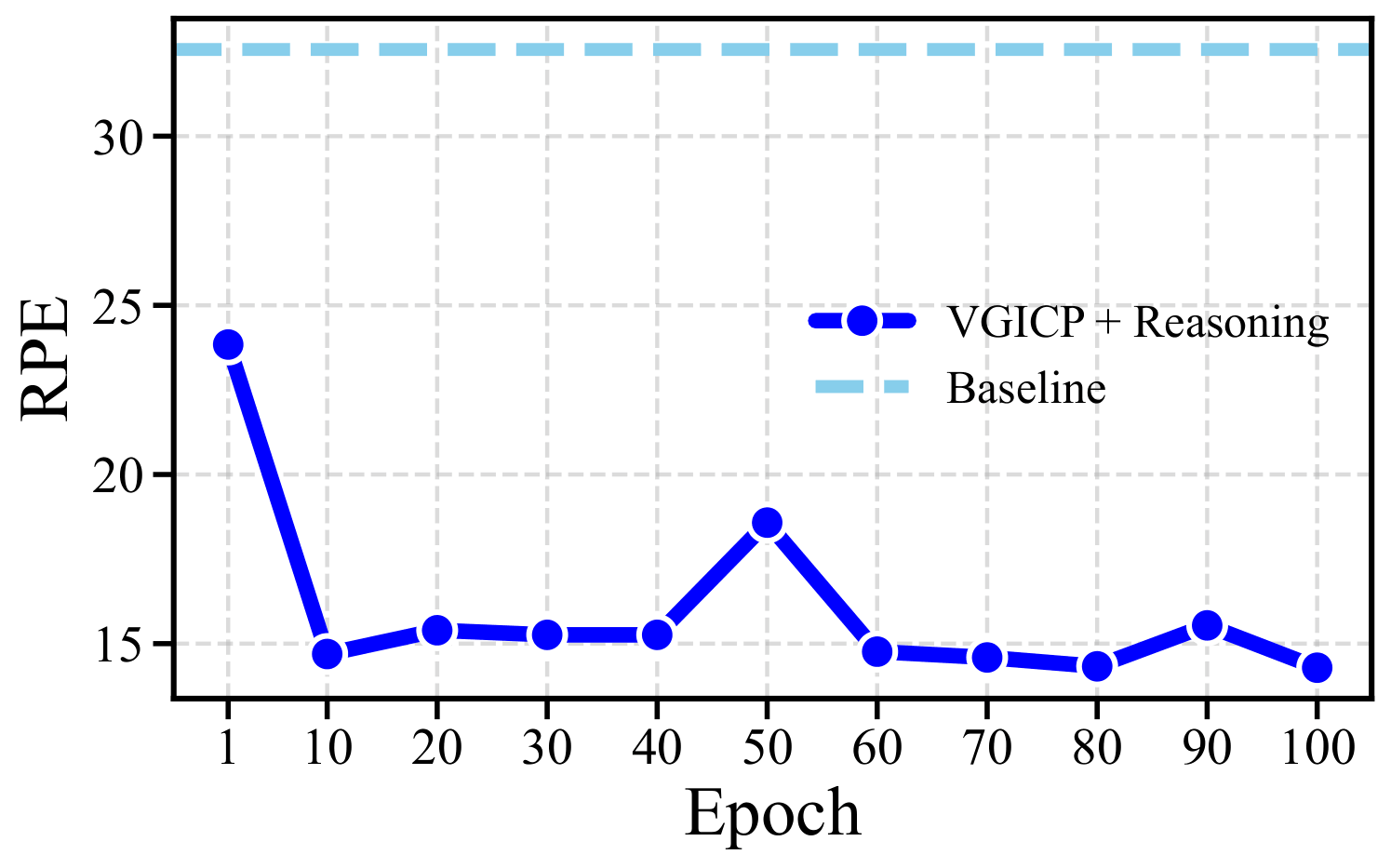}
    \caption{RPE over epochs}
    \label{fig:rpe_vs_epoch}
\end{subfigure}
\hfill
\begin{subfigure}[t]{0.48\columnwidth}
    \centering
    \includegraphics[width=\linewidth]{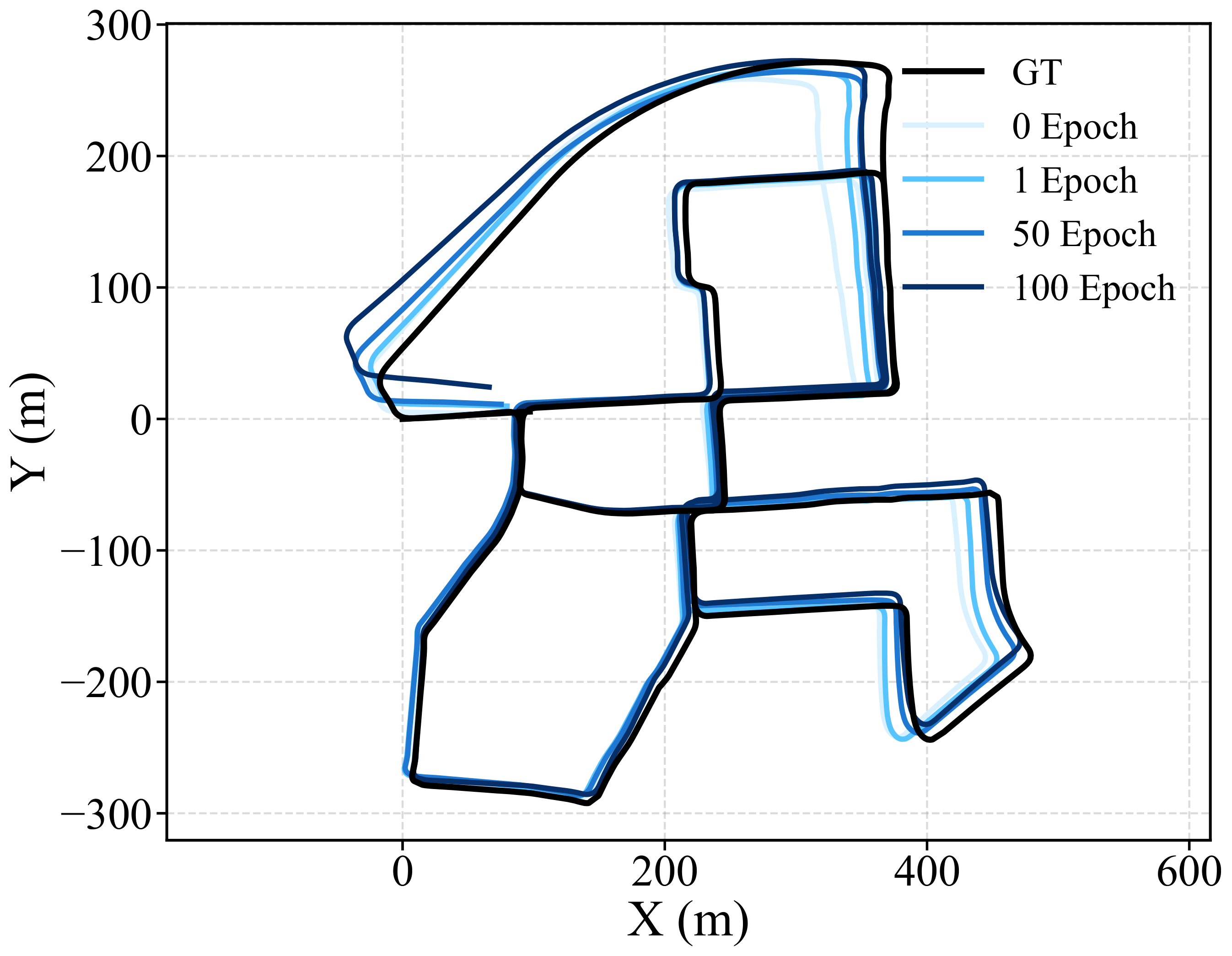}
    \caption{Trajectory comparison}
    \label{fig:epoch_trajectory}
\end{subfigure}

\caption{Odometry performance over different training epochs.
(a) Translational RPE over 300-frame intervals.
(b) Estimated trajectories at the corresponding epochs.}
\label{fig:epoch_analysis}
\end{figure}

\begin{table}[t]
\centering
\caption{Odometry performance under different LiDAR resolutions. We report translational RPE over 300-frame intervals, with relative error reductions from Raw in parentheses.}
\label{tab:odom_channel}
\footnotesize
\setlength{\tabcolsep}{3.5pt}
\renewcommand{\arraystretch}{1.08}
\setlength{\fboxsep}{1pt}
\begin{tabular}{lccc}
\toprule
\textbf{Input} & \textbf{16 ch.} & \textbf{32 ch.} & \textbf{64 ch.} \\
\midrule
Raw
& 15.55
& 7.88
& 3.68 \\
1-step
& \colorbox{blue!12}{10.02 ($\downarrow$35.6\%)}
& \colorbox{blue!15}{4.80 ($\downarrow$39.1\%)}
& \colorbox{blue!6}{3.50 ($\downarrow$4.9\%)} \\
2-step
& \colorbox{blue!25}{\textbf{7.85} ($\downarrow$49.5\%)}
& \colorbox{blue!25}{\textbf{4.11} ($\downarrow$47.8\%)}
& \colorbox{blue!8}{\textbf{3.46} ($\downarrow$6.0\%)} \\
\bottomrule
\end{tabular}
\end{table}

\section{Experiments}

\subsection{Experimental Setup}

We validate the proposed framework on KITTI odometry Sequence~00~\citep{geiger2012we} under 64-, 32-, and 16-channel LiDAR settings. The 32- and 16-channel scans are generated by downsampling the original 64 LiDAR channels with strides of 2 and 4, respectively.  We use a lightweight VGICP-based odometry pipeline~\citep{koide2021voxelized} as $\mathcal{F}_{\mathrm{SLAM}}$ and a PointNet++~\citep{qi2017pointnet++}-inspired architecture as the neural module $f_{\bm{\theta}}$ (Fig.~\ref{fig:network_architecture}). Starting from the raw memory information obtained from VGICP, we run two reciprocal rounds as a minimal experimental setting to examine the reciprocal effect between geometry reasoning and SLAM. Raw scans are replaced with densified scans generated from the reasoned underlying geometry, denoted as \emph{+~Reasoning} in the following experiments, as shown in Algorithm~\ref{alg:downstream} lines~3--9, and Fig.~\ref{fig:densify}.

\subsection{Evaluation Protocol}
We assess both odometry and global registration. For odometry, we compare the initial VGICP trajectory (\emph{Raw}) against the trajectories after one (\emph{1-step}) and two (\emph{2-step}) reciprocal updates, using the translational relative pose error (RPE) over 300-frame intervals. For global registration, we randomly sample 100 scan pairs whose ground-truth translation distance is within $10$m and align them with TEASER~\citep{yang2020teaser}, counting a pair as successful when the rotation error is below $10^{\circ}$ and the translation error is below $2$m.

\subsection{Results and Analysis}

\noindent\textbf{Odometry.}
Table~\ref{tab:odom_channel} shows that geometry reasoning reduces translational drift at all resolutions, with larger gains for sparser inputs. At 16 and 32 channels, the 2-step model cuts drift by roughly half ($49.5\%$ and $47.8\%$), while the gain at 64 channels is marginal ($6.0\%$) because the dense scans already provide reliable geometry. The second reciprocal round further improves odometry in all settings, indicating progressive trajectory refinement (Fig.~\ref{fig:epoch_analysis}). \\

\begin{figure}[t]
\centering
\begin{subfigure}[t]{0.48\columnwidth}
    \centering
    \includegraphics[
        width=\linewidth,
        trim={15 15 15 15},
        clip
    ]{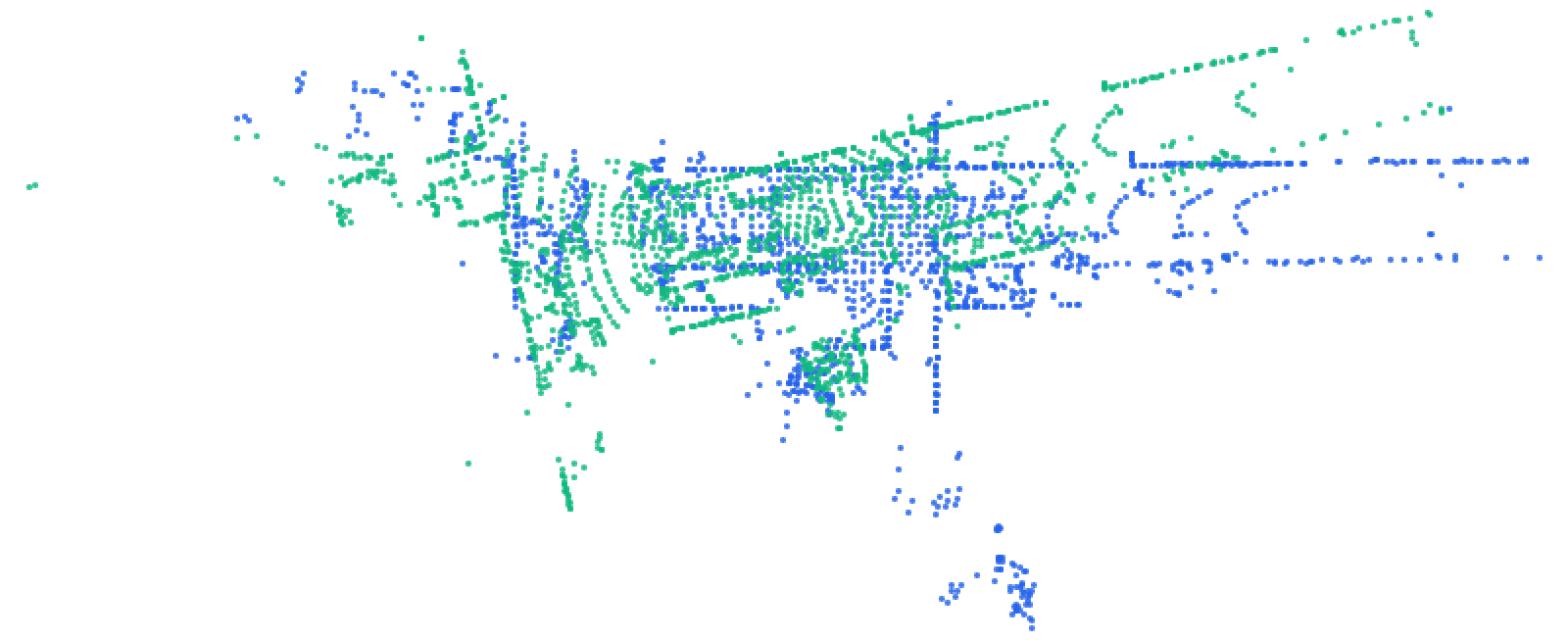}
    \caption{TEASER}
    \label{fig:reg_fail}
\end{subfigure}
\hfill
\begin{subfigure}[t]{0.48\columnwidth}
    \centering
    \includegraphics[
        width=\linewidth,
        trim={15 15 15 15},
        clip
    ]{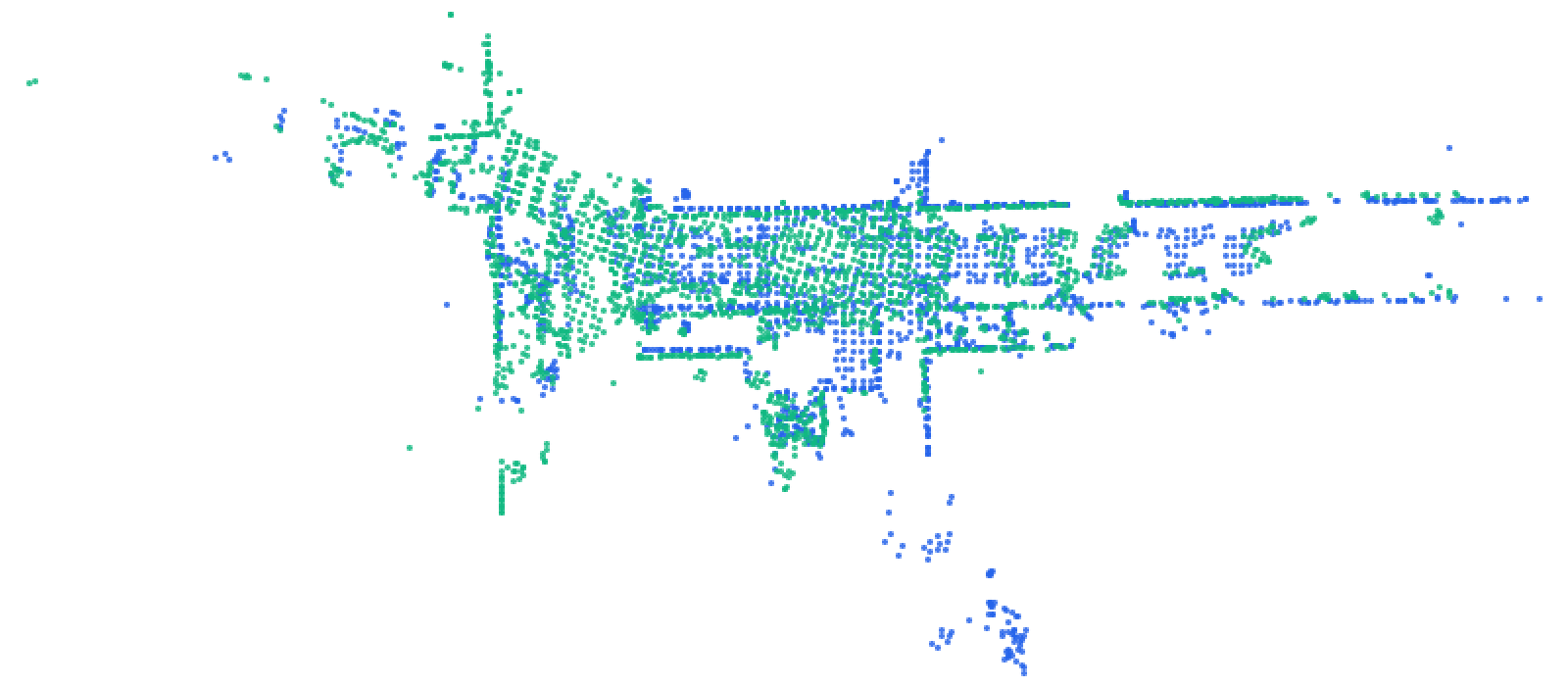}
    \caption{TEASER + Reasoning}
    \label{fig:reg_success}
\end{subfigure}

\caption{Qualitative comparison of global registration results. 
(a) TEASER fails under sparse LiDAR input. 
(b) TEASER with the proposed reasoning succeeds.}
\label{fig:global_registration}
\end{figure}

\begin{table}[t]
\centering
\caption{Global registration performance under different LiDAR resolutions. We report success rates, with relative improvements from Raw in parentheses.}
\label{tab:global_reg_channel}
\footnotesize
\setlength{\tabcolsep}{3.5pt}
\renewcommand{\arraystretch}{1.08}
\setlength{\fboxsep}{1pt}
\begin{tabular}{lccc}
\toprule
\textbf{Input} & \textbf{16 ch.} & \textbf{32 ch.} & \textbf{64 ch.} \\
\midrule
Raw    
& 79.00\% 
& 90.00\% 
& 97.00\% \\
1-step 
& \colorbox{blue!18}{\textbf{83.00\%} ($\uparrow$5.1\%)} 
& \colorbox{blue!25}{\textbf{96.00\%} ($\uparrow$6.7\%)} 
& 97.00\% ($-$0.0\%) \\
2-step 
& \colorbox{blue!6}{80.00\% ($\uparrow$1.3\%)} 
& \colorbox{blue!15}{94.00\% ($\uparrow$4.4\%)} 
& \colorbox{blue!8}{\textbf{98.00\%} ($\uparrow$1.0\%)} \\
\bottomrule
\end{tabular}
\end{table}

\noindent\textbf{Global registration.}
Table~\ref{tab:global_reg_channel} shows consistent success-rate improvements, with the largest gain at 32 channels ($+6.7\%$ at 1-step). Most gains are obtained in the first reasoning round, while later updates show diminishing returns except at 64 channels, which reaches $98\%$. Figure~\ref{fig:global_registration} shows a case where TEASER fails on raw scans but succeeds with reasoning.

\section{Limitations}
Although the proposed framework shows promising results, it has several limitations that require further investigation.
\begin{itemize}
    \item The current densification strategy samples only from highly anisotropic points. More principled covariance-based densification remains future work;
    \item The training process is time-consuming, taking about 15 minutes per epoch in our setup. Pretrained geometry reasoning models could reduce this cost; and
    \item Reciprocal updates do not always improve all metrics, as shown in Table~\ref{tab:global_reg_channel}. A reliable stopping criterion remains an open issue.
\end{itemize}

\section{Conclusion}
We presented a self-supervised geometry reasoning framework that learns underlying geometry for LiDAR SLAM without ground-truth poses or dense geometric labels. While the results demonstrate the potential of learned local geometry for LiDAR SLAM, addressing the above limitations is necessary to make the framework broadly applicable.

\bibliographystyle{plainnat}
\bibliography{references}

\end{document}